\documentclass{article}
\usepackage[american]{babel}
\usepackage{natbib}
\usepackage{mathtools}
\usepackage{amssymb}
\usepackage{amsfonts}
\usepackage{booktabs}
\usepackage{makecell}
\usepackage{enumitem}
\usepackage{url}
\usepackage{graphicx}
\usepackage{subcaption}
\usepackage{adjustbox}
\usepackage[toc,page]{appendix}
\usepackage{float}
\usepackage[top=2cm, bottom=2cm, left=2cm, right=2cm]{geometry}
\usepackage{authblk}

\usepackage{tikz}
\usepackage[most]{tcolorbox}
\tcbset{
  colback=gray!2,
  colframe=gray!35,
  arc=2mm,
  boxrule=0.4pt,
  left=6pt,right=6pt,top=6pt,bottom=6pt
}

\usepackage{algorithm}
\usepackage{algpseudocode}

\usepackage{amsthm}
\usepackage{thmtools}
\usepackage{hyperref}
\usepackage[capitalize,noabbrev]{cleveref}

\newtheorem{theorem}{Theorem}[section]
\newtheorem{lemma}[theorem]{Lemma}

\theoremstyle{definition}

\newtheorem{assumption}[theorem]{Assumption}

\theoremstyle{remark}
\newtheorem{remark}[theorem]{Remark}

\crefname{theorem}{theorem}{theorems}
\Crefname{theorem}{Theorem}{Theorems}
\crefname{lemma}{lemma}{lemmas}
\Crefname{lemma}{Lemma}{Lemmas}
\crefname{corollary}{corollary}{corollaries}
\Crefname{corollary}{Corollary}{Corollaries}
\crefname{definition}{definition}{definitions}
\Crefname{definition}{Definition}{Definitions}
\crefname{assumption}{assumption}{assumptions}
\Crefname{assumption}{Assumption}{Assumptions}
\crefname{example}{example}{examples}
\Crefname{example}{Example}{Examples}
\crefname{exercise}{exercise}{exercises}
\Crefname{exercise}{Exercise}{Exercises}
\crefname{remark}{remark}{remarks}
\Crefname{remark}{Remark}{Remarks}

\newcommand{\I}{{\mathbb{I}}}

\newcommand{\X}{\mathbf{X}}

\renewcommand{\P}{{\mathbb{P}}}

\newcommand{\ourmethod}{\texttt{CREDO}}

\title{CREDO: Epistemic-Aware Conformalized Credal Envelopes for Regression}

\author[1,2,4]{\href{mailto:lucruz45.cab@gmail.com}{Luben~M.~C.~Cabezas}}
\author[3]{Sabina J. Sloman}
\author[1]{Bruno M. Resende}
\author[3]{Fanyi Wu}
\author[3]{Michele Caprio}
\author[1]{Rafael Izbicki}

\affil[1]{%
    Department of Statistics\\
    Federal University of São Carlos\\
    São Carlos, Brazil
}
\affil[2]{%
    Institute of Mathematics and Computer Science \\ 
    University of São Paulo\\
    São Carlos, Brazil
}
\affil[3]{%
    The University of Manchester, UK\\
    Department of Computer Science
}
\affil[4]{%
    Université Grenoble Alpes, Inria, CNRS, Grenoble INP, LJK, France
}
  
\begin{document}
\maketitle

\begin{abstract}
Conformal prediction delivers prediction intervals with  distribution-free
coverage, but its intervals can look overconfident in regions where the model is
extrapolating, because standard conformal scores do not explicitly represent
 epistemic uncertainty. Credal methods, by contrast,
make epistemic effects visible by working with sets of plausible predictive
distributions, but they are typically model-based and lack calibration guarantees.
We introduce \texttt{CREDO}, a simple “credal-then-conformalize’’ recipe that
combines both strengths. \texttt{CREDO} first builds an interpretable credal
envelope that widens when local evidence is weak, then applies split conformal
calibration on top of this envelope to guarantee marginal coverage without further
assumptions. This separation of roles yields prediction intervals that are interpretable:  their width can be decomposed into aleatoric noise,
epistemic inflation, and a distribution-free calibration slack. We provide
a fast implementation based on trimming extreme posterior predictive
endpoints, prove validity, and show on benchmark regressions that \texttt{CREDO}
maintains target coverage while improving sparsity adaptivity at
competitive efficiency.
\end{abstract}

\section{Introduction}\label{sec:intro}
Uncertainty quantification (UQ) has become a central requirement for modern machine
learning systems, especially in settings where predictions inform downstream
decisions. In regression, a central task is to construct prediction intervals for a
future response $Y_{n+1}$ at covariates $X_{n+1}=x$.
Two complementary lines of work address this goal: conformal prediction (CP) provides distribution-free calibration, while imprecise-probability (credal) methods represent epistemic uncertainty via sets of plausible distributions but are typically model-based. In this paper, we combine these strengths to obtain prediction intervals that are both calibrated and epistemically interpretable.

On the calibration side, CP has emerged as one of the most influential approaches to
distribution-free predictive inference because it offers  distribution-free marginal coverage while assuming very
little about the data-generating mechanism \citep{Vovk2005,ShaferVovk2008,Lei2018}.
In its split form, conformal inference is simple and scalable: one fits any predictive model on a training set, computes a
nonconformity score on a calibration set, and then enlarges a model-based prediction set by
a data-driven correction chosen to ensure coverage.
However, standard conformal scores do not explicitly reflect \emph{epistemic} uncertainty---uncertainty due to limited information---as opposed to \emph{aleatoric} noise inherent in the data-generating process (see Section~\ref{sec:related} for related work).
As a result, split-conformal intervals can remain relatively narrow in regions of low local support where the underlying predictor extrapolates confidently; while marginal coverage is still guaranteed, the interval geometry may fail to communicate where predictions are driven by extrapolation rather than evidence.

From a different direction, the literature on imprecise probabilities and
credal sets provides a principled language for epistemic uncertainty by
representing beliefs not by a single distribution, but by a
set of plausible distributions.
Imprecise Probability (IP) theory \citep{Walley1991,augustin2014introduction,TroffaesDeCooman2014} studies uncertainty in how stochasticity is modeled.
Credal sets, convex and (weak$^\star$-)closed sets of probabilities \citep{levi2}, are among the main tools from IP theory. They are attractive precisely because they
can make epistemic effects explicit: in poorly informed regions the set of plausible conditionals can widen, while additional evidence can reduce this imprecision \citep{dilation,caprio2023constriction}. Yet, credal prediction
sets are typically \emph{model-based} and thus \emph{uncalibrated}: they depend on how the credal set is constructed and do not usually come with distribution-free coverage guarantees.

This paper aims to combine the best of both worlds in regression. We develop a
framework that (a) produces interpretable credal predictive uncertainty to represent local epistemic effects, and (b) retains the distribution-free marginal coverage guarantee of conformal calibration.
Concretely, we start from a local credal set of conditional predictive distributions $\mathcal F_0(x)$, induce from it a \emph{credal quantile envelope} $[\ell(x),u(x)]$ that summarizes the range of plausible conditional quantiles, and then apply split conformal calibration using a distance-to-envelope score.
The resulting prediction interval, \ourmethod{} (Conformalized Regression with Epistemic-aware creDal envelOpes), is easy to implement, retains distribution-free marginal coverage, and expands adaptively in regions where the credal envelope reflects limited local information.

Figure~\ref{fig:credo_vs_cqr} illustrates the motivation on a toy regression with a structural transition near $x=0$ and weaker covariate support toward the extremes. 
Both CQR \citep{Romano2019} and \ourmethod{} are split-conformal and thus target the same marginal coverage, but their \emph{geometry} differs: \ourmethod{} selectively widens where local evidence is weak (near the transition and at the edges), whereas CQR can remain comparatively uniform and appear overconfident in sparse regions. 
 \ourmethod{} also makes this behavior diagnosable via a decomposition of interval width into aleatoric and epistemic components, with epistemic uncertainty peaking exactly where the envelope inflates.
\begin{figure}[h]
    \centering
    \includegraphics[width=0.7\linewidth]{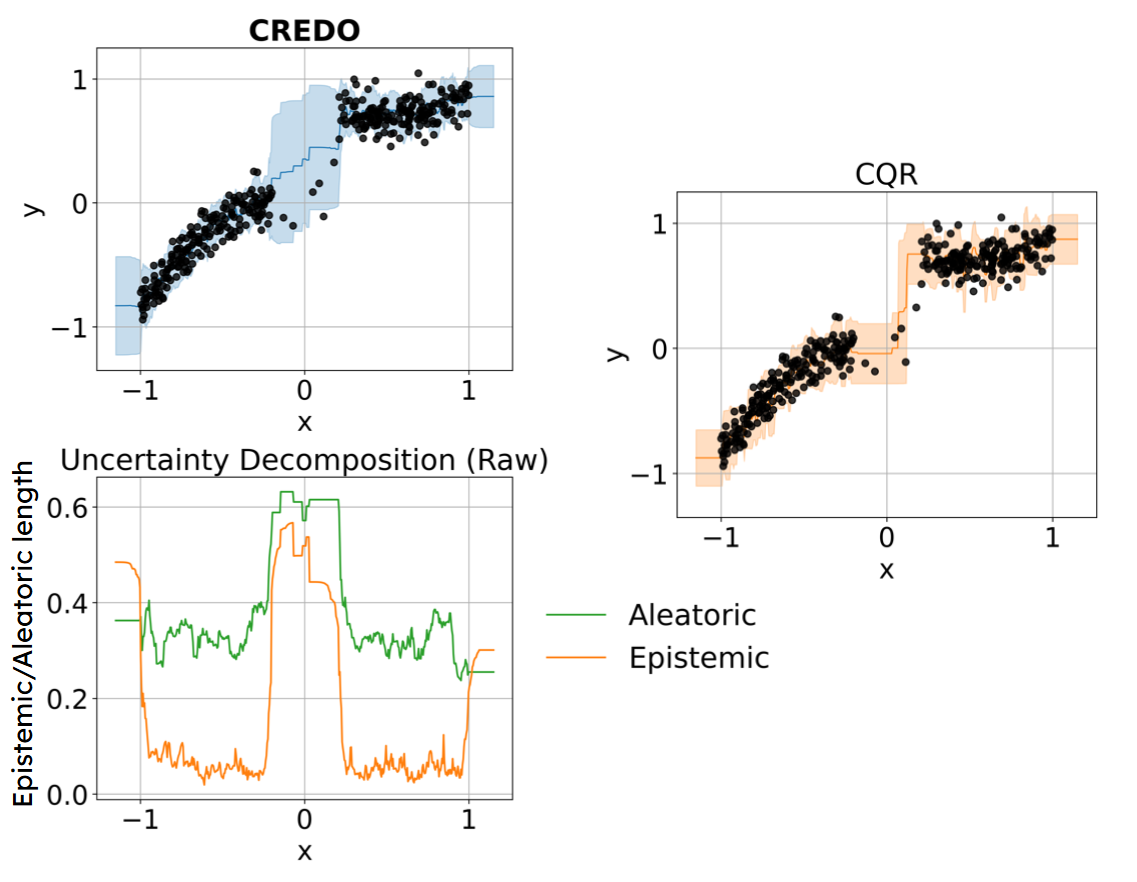}
\caption{\textbf{Motivation and decomposition.}
Comparison of split-conformal prediction intervals.
\textit{Top-left:} \ourmethod{}  yields intervals that widen adaptively in regions of high epistemic uncertainty—near the transition and toward the covariate extremes.
\textit{Top-right:} CQR (which does not take epistemic uncertainty into account) produces comparatively uniform intervals that can look overconfident where local support is weak (few nearby training points), despite marginal validity.
\textit{Bottom-left:} \ourmethod{} enables an uncertainty decomposition into aleatoric (green) and epistemic (orange) components; epistemic uncertainty peaks around $x=0$ and at the edges, aligning with where \ourmethod{} inflates the envelope. }
\label{fig:credo_vs_cqr}
\end{figure}

\subsection{Novelty}
Our contributions are:

\paragraph{Credal-to-conformal regression via an explicit envelope.}
We propose \ourmethod{}, which starts from a covariate-dependent credal set of predictive distributions $\mathcal F_0(x)$ and summarizes it by a $(1-\alpha_0)$ credal quantile envelope $[\ell(x),u(x)]$.
We then conformalize this envelope using a distance-to-envelope score, guaranteeing finite-sample marginal coverage under exchangeability.

\paragraph{A lightweight credal construction from posterior endpoint trimming.}
We introduce an endpoint-trimmed posterior credal set that produces envelopes by trimming extreme posterior predictive endpoints.
This yields a simple and scalable mechanism to encode epistemic effects without modifying the conformal machinery.

\paragraph{Diagnostics via a width decomposition.}
We provide a practical diagnostic that attributes interval width to (i) an aleatoric core from the conditional model, (ii) an epistemic inflation term induced by credalization, and (iii) the conformal calibration slack. This makes it possible to localize \emph{why} uncertainty is large at a given $x$.

\subsection{Relation to Other Work}
\label{sec:related}

\paragraph{Conformal Inference.}
Conformal prediction yields distribution-free predictive sets by calibrating a
nonconformity score $s(x,y)$ on held-out data and inverting an empirical
$(1-\alpha)$ quantile, guaranteeing finite-sample marginal coverage under
exchangeability \citep{Vovk2005,ShaferVovk2008,Papadopoulos2002,Lei2018,AngelopoulosBates2021,CabezasOttoIzbickiStern2025}.
From the standpoint of epistemic uncertainty, this guarantee does not by
itself ensure that sets expand in data-sparse or extrapolative regions: widely
used regression scores (including residual scores and CQR-style quantile-distance
scores) often track primarily aleatoric variability, so the resulting
expansion can be weakly sensitive to lack of local support
\citep{Lei2018,Romano2019,IzbickiShimizuStern2020,IzbickiShimizuStern2022,HullermierWaegeman2021,Izbicki2025}.

Motivated by this, a growing line of work augments conformal prediction with
epistemic-aware scores so that the conformal expansion increases when the
predictor is extrapolating. For example,
\citet{Cocheteux2025} modifies weighted regression-split conformal prediction by
redefining the scale term to represent epistemic uncertainty about $Y$, estimated
via Monte Carlo dropout. Closest to our regression setting, \citet{Rossellini2024}
propose uncertainty-aware CQR variants that inflate the CQR score using
ensemble-based proxies for epistemic instability in the estimated quantiles
(e.g., ensemble standard deviation in UACQR-S or ensemble order statistics in
UACQR-P). A complementary, score-centric’ route is EPICScore which models the
conditional distribution of a base score with Bayesian tools and then
transforms the score to make epistemic effects visible while
retaining conformal calibration \citep{CabezasEtAlEPICSCORE}, though it can require
large calibration samples to learn this conditional score distribution.
While these methods successfully inject epistemic information into conformal
calibration, the epistemic component typically enters as an implicit
score-rescaling and is therefore harder to interpret. In contrast,
\ourmethod{} yields calibrated
intervals whose width admits a transparent decomposition into aleatoric noise,
credal (epistemic) inflation, and calibration slack, enabling direct inspection
of where each type of uncertainty grows.

\paragraph{Bayesian Predictive Inference.}
Bayesian predictive inference constructs uncertainty regions by integrating
parameter uncertainty through the posterior distribution
\citep{bernardo2009bayesian}. In this framework, uncertainty arises from both
aleatoric variability and epistemic uncertainty about model parameters
\citep{HullermierWaegeman2021}. While posterior predictive intervals are
calibrated under correct model specification, they may substantially under-cover
under misspecification or distribution shift, since their validity depends on
the assumed likelihood model \citep{Jansen2013,ovadia2019}.

Several recent works combine Bayesian modeling with conformal prediction.
For example, \citet{FongHolmes2021} use posterior predictive densities as
nonconformity scores, and \citet{Wu2026BCP} optimize the conformal threshold
under a decision-risk problem via Bayesian Quadrature. Bayesian--conformal
hybrids also appear in the Gaussian-process literature, where GP posterior
uncertainty is leveraged to widen calibrated sets away from observed inputs
\citep{Jaber2024,PionVazquez2024}. Conformalized Bayesian Inference  
\citep{bariletto2025conformalized,cabezas2025cp4sbi} takes a different approach by applying
conformal prediction directly to Bayesian posterior distributions in parameter
space, yielding valid uncertainty regions for parameters and derived quantities.
 These approaches typically rely on a single posterior predictive distribution
that averages over parameter uncertainty. 

In contrast, our method encodes
epistemic uncertainty through \emph{credal sets} of predictive distributions.
This representation captures not only parameter uncertainty but also model
ambiguity, providing a principled way to inflate predictions when multiple
modeling explanations are compatible with the local data. As a result, the
induced inflation can be more robust in data-sparse or extrapolative regions,
where a Bayesian posterior may extrapolate confidently due to inductive bias or
approximation artifacts. We then conformalize the resulting ambiguity-aware
envelopes, retaining finite-sample marginal coverage under exchangeability.



\paragraph{Credal Sets and Imprecise Probabilities.} 
Rather than restricting to a single probability distribution, the theory of imprecise probabilities represents epistemic uncertainty by second order distributions, random sets, or credal sets \citep{Walley1991}. In this paper, we focus on the latter for its ease of implementation and interpretation.
Credal sets arise naturally when evidence is insufficient to uniquely identify a distribution, and provide a principled framework for separating epistemic from aleatoric uncertainty \citep{abellan,HullermierWaegeman2021}.
 Recent work has established formal connections between conformal prediction and imprecise probability. In particular, \citet{Cella2022} show that, under a minimal assumption, conformal p-values induce an upper envelope $\overline{P}(A)=\sup_{P \in \mathcal{P}}P(A)$, for all measurable sets $A$, for a credal set $\mathcal{P}$ of predictive distributions. 
 
 Building on this perspective, \citet{caprio2025conformalpredictionregionsimprecise,caprio2025joyscategoricalconformalprediction} prove that a classical conformal prediction region coincides with the imprecise highest density regions (IHDR, the IP version of a credible interval) of the credal set $\mathcal{P}$, thereby revealing a structural equivalence between conformal prediction and certain imprecise probabilistic constructions. Recent work further explores this connection from an epistemic uncertainty perspective. \citet{Chau2026MMICP} show that split conformal prediction implicitly induces a predictive credal set, and propose a mutual-model-information criterion to quantify epistemic uncertainty within this framework.
Relatedly, \citet{caprio2025conformalizedcredalregionsclassification} develop conformalized credal regions for classification with ambiguous ground truth, combining credal uncertainty with conformal validity guarantees.
Similarly, \citet{Alireza2024} introduce conformalized credal set predictors that leverage conformal calibration to construct distributionally robust prediction sets while retaining finite-sample validity.

Rather than interpreting conformal regions as arising from an induced credal set, we explicitly construct a covariate-dependent credal set of predictive distributions to model local epistemic uncertainty, and subsequently apply split conformal calibration. In this way, \ourmethod{} integrates credal modeling and conformal validity at the level of predictive distributions.

\section{Methods}
\label{sec:method}
Our goal is to construct prediction sets for a real-valued response $Y_{n+1}$
given covariates $X_{n+1}=x$ that
(i) encode \emph{local epistemic uncertainty} through a covariate-dependent
\emph{credal} set of predictive distributions, and
(ii) enjoy \emph{finite-sample, distribution-free marginal coverage}
via conformal calibration.
 Let $D=\{(X_1,Y_1),\ldots,(X_n,Y_n)\}$ be an exchangeable dataset.
We split $D$ into a training set $D_{\mathrm{tr}}$ and a calibration set
$D_{\mathrm{cal}}$ of size $m:=|D_{\mathrm{cal}}|$.
All model-based quantities (credal sets and envelopes) are learned using
$D_{\mathrm{tr}}$; the conformal correction is computed using $D_{\mathrm{cal}}$.

\paragraph{High-level idea.}
Fix a ``credal nominal'' level $\alpha_0\in(0,1)$ and a conformal level
$\alpha\in(0,1)$.
For each covariate value $x$, we:
\begin{enumerate}
\item  Construct a local credal set of conditional predictive
distributions $\mathcal F_0(x)$ for $Y\mid X=x$.
\item  Take the credal \emph{quantile envelope}
$[\ell(x),u(x)]$ induced by $\mathcal F_0(x)$ at level $1-\alpha_0$.
\item   Use the distance-to-envelope score
$s(x,y)$ and compute its calibration quantile $\hat\tau$ on $D_{\mathrm{cal}}$.
\item Output the conformalized interval
$C(x)=[\ell(x)-\hat\tau,\;u(x)+\hat\tau]$.
\end{enumerate}
The final interval can be interpreted as an \emph{aleatoric core} (from the
conditional model), plus an \emph{epistemic expansion} (from the credal set),
plus a \emph{distribution-free calibration expansion} (from conformal). Figure \ref{fig:credo_scheme} summarizes our procedure. 
Next, we detail each of these steps.

\begin{figure}[H]
    \centering
    \includegraphics[width=0.7\columnwidth]{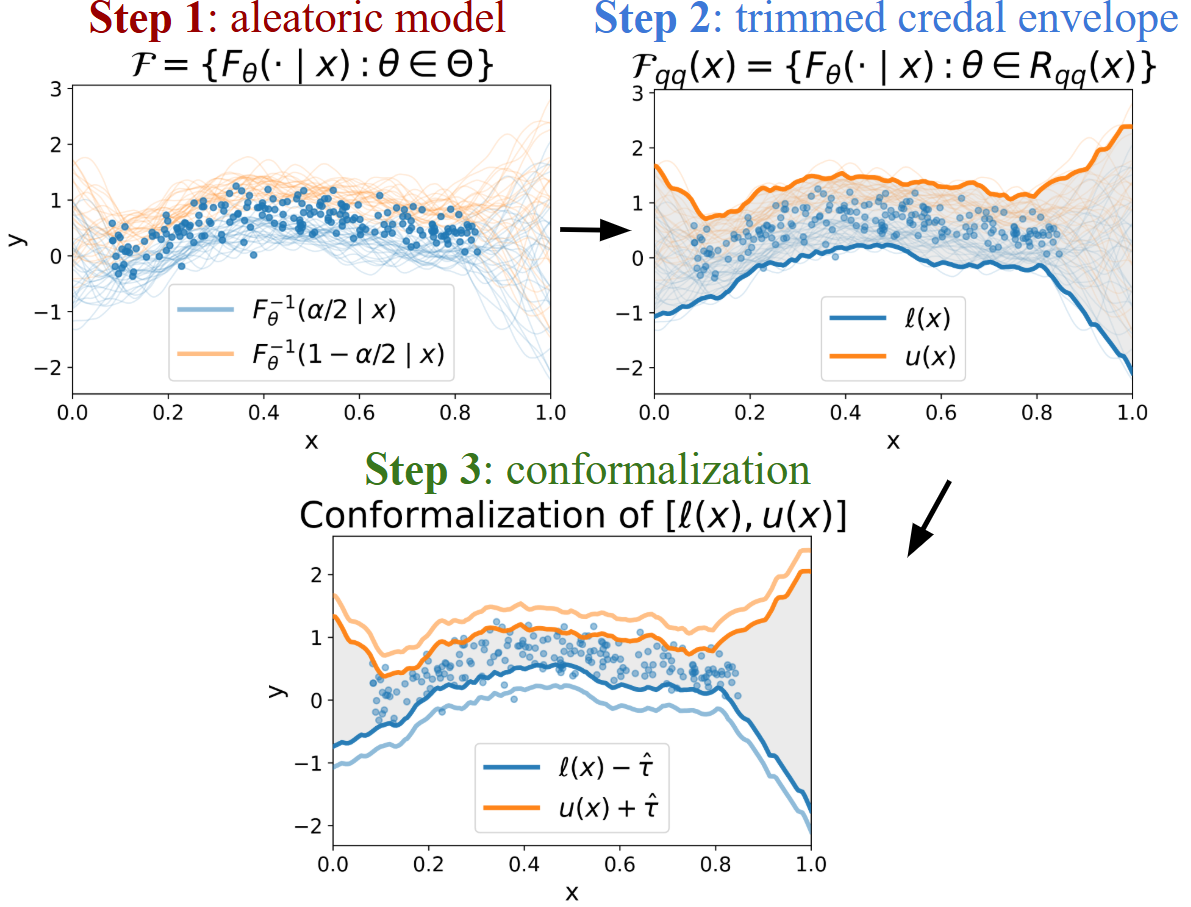}
    \caption{\ourmethod{} scheme.
From a model for the aleatoric uncertainty, we form a trimmed credal quantile envelope $[\ell(x),u(x)]$ (at nominal level $1-\alpha_0$), then split-conformalize using the distance-to-envelope score to output $C(x)=[\ell(x)-\hat\tau,\;u(x)+\hat\tau]$ with marginal coverage $1-\alpha$.}

    \label{fig:credo_scheme}
\end{figure}

\paragraph{Step 1: conditional aleatoric model and local credal sets.}
We start from a (possibly nonparametric) conditional model class $
\mathcal F=\{F_\theta(\,\cdot\,\mid x):\theta\in\Theta\},$
where $F_\theta(\cdot\mid x)$ denotes the conditional CDF of $Y$ given $X=x$.
The likelihood model $F_\theta$ is assumed precise; epistemic uncertainty
is represented through uncertainty about $\theta$.
We define a local credal predictive set at covariate value $x$ to be any subset
$\mathcal F_0(x)\subseteq \mathcal F$, interpreted as the collection of
predictive distributions deemed plausible at $x$ after observing $D_{\mathrm{tr}}$. In Section \ref{sec:endpoint} we discuss how to choose $\mathcal F_0(x)$ in this paper, although our framework is  general.

\paragraph{Step 2: credal quantile envelope.}
Fix $\alpha_0\in(0,1)$. For each covariate value $x$,  we define the credal envelope
\begin{equation}
\label{eq:quantile-envelope}
\begin{aligned}
[\ell(x),u(x)]
&=
\Big[
\inf_{F\in\mathcal F_0(x)} F^{-1}(\alpha_0/2\mid x),
\sup_{F\in\mathcal F_0(x)} F^{-1}(1-\alpha_0/2\mid x)
\Big].
\end{aligned}
\end{equation}
By construction, for every $F\in\mathcal F_0(x)$, the model-based central
interval $\big[F^{-1}(\alpha_0/2\mid x), F^{-1}(1-\alpha_0/2\mid x)\big]$
is contained in $[\ell(x),u(x)]$.

\begin{remark}
The credal set $\mathcal F_0(x)$ induces a predictive probability box (p-box, that can be thought of as an interval of CDF's) \citep[Section 4.6.4]{augustin2014introduction}
via the lower/upper CDF bounds $\underline F(t\mid x):=\inf_{F\in\mathcal F_0(x)} F(t\mid x)$ and $\overline F(t\mid x):=\sup_{F\in\mathcal F_0(x)} F(t\mid x).$ 
Using the generalized inverse $G^{-1}(p\mid x):=\inf\{t:\,G(t\mid x)\ge p\}$,
the quantile envelope \eqref{eq:quantile-envelope} depends only on this p-box, 
as $\ell(x)=\overline F^{-1}(\alpha_0/2\mid x)$
and
$u(x)=\underline F^{-1}(1-\alpha_0/2\mid x).$
\end{remark}

\paragraph{Steps 3--4: conformal calibration of the credal envelope.}
The credal envelope $[\ell(x),u(x)]$ is not guaranteed to be calibrated in
finite samples. To obtain distribution-free validity, we apply split conformal
calibration using the nonconformity score
\begin{equation*}
\label{eq:score}
s(x,y)
:= d\!\left(y,[\ell(x),u(x)]\right)
= \max\{\ell(x)-y,\;y-u(x)\},
\end{equation*}
which measures how far $y$ lies outside the credal envelope \citep{Romano2019}.
Let $\hat\tau$ be the $\lceil (m+1)(1-\alpha)\rceil$-th order statistic of the
calibration scores $\{s(x_i,y_i):(x_i,y_i)\in D_{\mathrm{cal}}\}$.
The final prediction set at covariate value $x$ is
\begin{equation}
\label{eq:final-set}
C(x)
=\{y:\ s(x,y)\le \hat\tau\}
=
[\ell(x)-\hat\tau,\;u(x)+\hat\tau].
\end{equation}

\subsection{One implementation of Step 1: endpoint-trimmed posterior credal sets}
\label{sec:endpoint}
We now specify the choice of $\mathcal F_0(x)$ used in the experiments in this paper.
The key idea is to represent epistemic uncertainty by retaining only those
posterior predictive distributions whose central predictive endpoints
are not extreme under the posterior. This yields an interpretable and
computationally light credal set.

In order to do that, we adopt a Bayesian perspective and place a prior $\pi(\theta)$ on $\Theta$,
leading to a posterior $\pi(\theta\mid D_{\mathrm{tr}})$.
Fix $\alpha_0\in(0,1)$ and define the predictive endpoints under parameter
value $\theta$:
\begin{align*}
q_L(\theta,x)&=F_\theta^{-1}(\alpha_0/2\mid x) \text{ and}\\q_U(\theta,x)&=F_\theta^{-1}(1-\alpha_0/2\mid x).
\end{align*}
Under $\theta\sim\pi(\theta\mid D_{\mathrm{tr}})$, the pair
$\big(q_L(\theta,x),q_U(\theta,x)\big)$ is random.
Fix a trimming level $\gamma\in(0,1)$ and set $C_L(x)$ and $C_U(x)$ to satisfy
\begin{align*}
\pi\!&\left(q_L(\theta,x)\le C_L(x)\mid x,D_{\mathrm{tr}}\right)\\
=    \pi&\!\left(q_U(\theta,x)\ge C_U(x)\mid x,D_{\mathrm{tr}}\right)=\gamma/2.
\end{align*}
Intuitively, this discards the $\gamma/2$ lowest lower endpoints and the
$\gamma/2$ highest upper endpoints under the posterior.
We define the endpoint-trimmed parameter set as
\[
R_{qq}(x)
=
\{\theta:\ q_L(\theta,x)\ge C_L(x),\;\; q_U(\theta,x)\le C_U(x)\},
\]
and the induced local credal set $
\mathcal F_{qq}(x)=\{F_\theta(\cdot\mid x):\theta\in R_{qq}(x)\}.
$ 
By the union bound, notice that the posterior probability of $R_{qq}(x)$ is at least $1-\gamma$. 
Finally, the credal envelope with respect to this credal set is the
\emph{trimmed endpoint envelope}
\begin{equation}
\label{eq:trimmed-envelope}
[\ell(x),u(x)]:=[C_L(x),C_U(x)].
\end{equation}

This envelope is easy to implement via Monte Carlo  and captures posterior endpoint dispersion in a way that is directly
interpretable as epistemic uncertainty. 
Indeed, 
 in practice, we draw $\theta^{(1)},\ldots,\theta^{(B)}\sim\pi(\theta\mid D_{\mathrm{tr}})$,
compute endpoint draws $q_L^{(b)}(x),q_U^{(b)}(x)$, and set $C_L(x),C_U(x)$ to the
empirical $(\gamma/2)$-quantile of $\{q_L^{(b)}(x)\}_{b=1}^B$ and the empirical
$(1-\gamma/2)$-quantile of $\{q_U^{(b)}(x)\}_{b=1}^B$, respectively.
Section \ref{sec:alg} in the Appendix shows the full \ourmethod{} algorithm.

\subsection{Data-Density-Aware Credal Sets}

Posterior predictive intervals from common Bayesian learners can be too
insensitive to local data scarcity, yielding overconfident sets in
data-sparse regions unless epistemic structure is injected explicitly
\citep{ovadia2019}. For instance,
BART is built from regression trees with constant leaf
values, hence its fits are piecewise constant in $x$ \citep{Chipman2010BART},
and standard tree extrapolation can become flat outside the training range
\citep{Wang2022LocalGPBART}. Furthermore, a global conformal correction may fail to adequately account for heteroscedasticity even in highly populated regions \citep{Lei2018, Romano2019}. While a uniform shift $\hat{\tau}$ might be necessary to cover concentrated aleatoric noise in some areas, applying the same slack to well-supported, low-noise regions leads to overly conservative intervals that ignore the model's localized precision.

These limitations motivate using a
covariate-dependent $\gamma(x)$ that adapts to local information density.
By replacing the fixed $\gamma$ with $\gamma(x)$, we selectively retain more posterior mass—yielding wider credal envelopes—where covariate support is low and the model is prone to overconfidence. Conversely, in regions with high data density, $\gamma(x)$ allows for thinner envelopes, leveraging localized precision to reflect greater confidence in the model’s estimates. 


Concretely, for each $x$, we define $C_L(x)$ and $C_U(x)$ as before but with $\gamma$ replaced by $\gamma(x)$. A smaller $\gamma(x)$ retains more posterior realizations, typically yielding a wider envelope to capture epistemic uncertainty, whereas a larger $\gamma(x)$ trims more aggressively to produce efficient, tight intervals in well-supported areas. Importantly, because $\gamma(x)$ depends only on the covariates $X$, the split conformal validity of Equation \eqref{eq:final-set} is preserved.

In our implementation, we  choose $\gamma(x)$
according to
\[
\gamma(x)=\gamma_{\max}-(\gamma_{\max}-\gamma_{\min})\,
\sigma\!\left(\frac{\mathrm{sc}(x)-m_{\gamma}}{\tau_\gamma}\right),
\]
where $\mathrm{sc}(x)$ is a
scarcity score, $m_{\gamma}, \gamma_{\min}, \tau_\gamma,$
and $\gamma_{\max}$ are constants and $\sigma(x)$ is the logistic sigmoid function. The idea is that, if $x$ is in a data-sparse region,
$\gamma(x)$ will be small (close to $\gamma_{\min})$, leading to less trimming on the credal set construction (i.e., larger sets).
On the other hand, in well-supported regions,  $\gamma(x) \approx \gamma_{\max}$. 

To construct the scarsity score, we start with $\phi:\mathcal X\to\mathbb
R^d$, a representation in which Euclidean distances are meaningful (for
tabular data, $\phi(x)$ can be standardized covariates; for complex inputs, it
can be a learned embedding \citep{wilson2016deep} from the predictive model). Fix $k$ and, for any
$x$, let $X_{(k)}(x)$ denote the $k$-th nearest neighbor of $x$ among the
training covariates $\{X_i:(X_i,Y_i)\in D_{\mathrm{tr}}\}$. Define the kNN radius $
r_k(x):=\|\phi(x)-\phi(X_{(k)}(x))\|,
$ so larger $r_k(x)$ indicates lower local design density around $x$. To make
$r_k(x)$ comparable across problems, we compute reference quantiles on the
training set, e.g.
$q_{\mathrm{lo}}=\operatorname{Quantile}_{0.50}(\{r_k(X_i)\})$ and
$q_{\mathrm{hi}}=\operatorname{Quantile}_{0.95}(\{r_k(X_i)\})$, and form the
scarcity score
\begin{equation}
\label{eq:scarcity_score}
    \mathrm{sc}(x):=\frac{r_k(x)-q_{\mathrm{lo}}}{q_{\mathrm{hi}}-q_{\mathrm{lo}}+\varepsilon}.
\end{equation}

where $\varepsilon>0$ is small. We detail standard choices for each parameter and representations in Appendix \ref{appendix:adaptive_gamma}.


Figure~\ref{fig:gamma} shows the data-support--aware trimming level $\gamma(x)$ as a function of $x$ in a toy example. In well-supported regions, $\gamma(x)$ is larger, so more aggressive endpoint trimming keeps the credal envelope $[\ell(x),u(x)]$ close to the baseline posterior interval. In data-sparse or extrapolative regions, $\gamma(x)$ becomes smaller, reducing trimming and thereby widening $[\ell(x),u(x)]$ to reflect increased epistemic uncertainty.

\begin{figure}[h]
    \centering
    \includegraphics[width=0.7\linewidth]{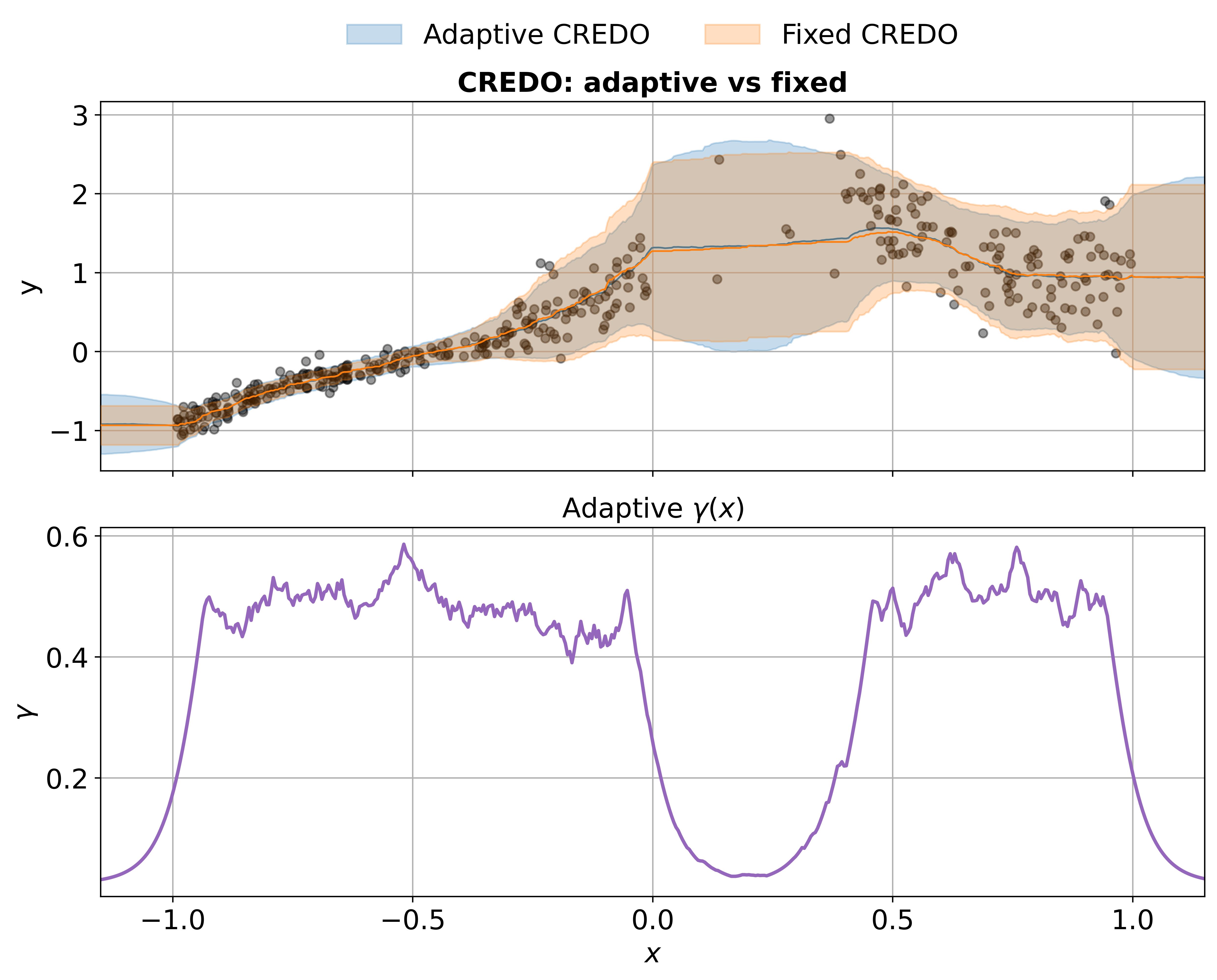}
  \caption{\textbf{Data-support--aware trimming via $\gamma(x)$.}
\textit{Top:} Prediction intervals under a constant trimming level $\gamma$ versus the adaptive rule $\gamma(x)$ driven by a kNN-based scarcity score.
Adaptive trimming decreases $\gamma(x)$ in data-sparse/extrapolative regions (less endpoint trimming, wider credal envelopes) and increases $\gamma(x)$ in data-dense regions (more trimming, tighter envelopes).
\textit{Bottom:} The resulting $\gamma(x)$ profile as a function of the covariate $x$.
}
    \label{fig:gamma}
\end{figure}

\subsection{Uncertainty Decomposition}
\label{sec:unc_decomp}

A central feature of \ourmethod{} is that it yields an
interpretable decomposition of the final conformal interval length into (i) an
aleatoric baseline coming from the conditional model, (ii) an epistemic
contribution induced by posterior uncertainty over predictive endpoints, and
(iii) a distribution-free calibration adjustment.

For each parameter value $\theta$,  we define the conditional (aleatoric) central  
interval at covariate $x$,
$
I_\theta(x)=\bigl[q_L(\theta,x),\ q_U(\theta,x)\bigr],$ 
and its length $|I_\theta(x)|=q_U(\theta,x)-q_L(\theta,x)$. We summarize the
irreducible (aleatoric) uncertainty at $x$ by the posterior mean length
\begin{align*}
U_A(x) :&=\mathbb E\!\left[|I_\theta(x)|\mid x, D_{\mathrm{tr}}\right]\\
&=\int \bigl(q_U(\theta,x)-q_L(\theta,x)\bigr)\,d\pi(\theta\mid D_{\mathrm{tr}}).    
\end{align*}
We then decompose the total length of 
$C(x)$ (Eq.~\ref{eq:final-set}) as
\begin{equation*}
\begin{aligned}
|C(x)|
&= \underbrace{U_A(x)}_{\mathclap{\text{aleatoric baseline}}}\quad + \underbrace{\Bigl(|[\ell(x),u(x)]|-U_A(x)\Bigr)}_{\mathclap{\text{epistemic contribution}}} \\
&+ \underbrace{2\hat\tau}_{\mathclap{\text{distribution-free calibration slack}}}.
\end{aligned}
\end{equation*}

In practice, $U_A(x)$ is  estimated via Monte Carlo draws
$\theta^{(1)},\ldots,\theta^{(B)}\sim \pi(\theta\mid D_{\mathrm{tr}})$:
\[
\widehat U_A(x)=\frac{1}{B}\sum_{b=1}^B \bigl(q_U(\theta^{(b)},x)-q_L(\theta^{(b)},x)\bigr),
\]
while $|[\ell(x),u(x)]|$ is obtained directly from the trimmed endpoint
quantiles, and $\hat\tau$ is computed from the calibration scores
\eqref{eq:score}. 

\section{Theory}

We begin by showing that the credal quantile envelope defined in Eq.~\ref{eq:quantile-envelope} is conservative for every distribution in the local credal set. Concretly,
 for any fixed
  $x$ and any $F\in\mathcal F_0(x)$, the credal envelope covers
$Y\mid X=x$ with probability at least $1-\alpha_0$ (according to $F$): 

\begin{theorem}
\label{thm:credal-coverage-envelope}
Assume $[\ell(x),u(x)]$ is defined by \eqref{eq:quantile-envelope}.
Then for every fixed $x$ and every $F\in\mathcal F_0(x)$,
\[
\mathbb{P}_{F}( Y\in[\ell(x),u(x)] \mid X=x)\ \ge\ 1-\alpha_0.
\]
\end{theorem}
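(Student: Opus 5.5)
The plan is a direct containment argument: I will show that the credal envelope contains the model-based central interval of each $F\in\mathcal F_0(x)$, and then read off the coverage from the definition of quantiles under $F$. Fix $x$ and $F\in\mathcal F_0(x)$. Write $a_F:=F^{-1}(\alpha_0/2\mid x)$ and $b_F:=F^{-1}(1-\alpha_0/2\mid x)$, using the generalized inverse $G^{-1}(p)=\inf\{t: G(t)\ge p\}$ from the Remark. By \eqref{eq:quantile-envelope}, $\ell(x)\le a_F$ and $u(x)\ge b_F$, so $[a_F,b_F]\subseteq[\ell(x),u(x)]$. By monotonicity of probability,
\[
\mathbb P_F\bigl(Y\in[\ell(x),u(x)]\mid X=x\bigr)\ \ge\ \mathbb P_F\bigl(Y\in[a_F,b_F]\mid X=x\bigr).
\]
The result therefore reduces to showing that the central quantile interval of a single distribution has probability at least $1-\alpha_0$.

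For that reduction, I would use the complement:
\[
\mathbb P_F\bigl(Y\in[a_F,b_F]\bigr)=1-\mathbb P_F(Y<a_F)-\mathbb P_F(Y>b_F).
\]
\begin{itemize}
\item \emph{Upper tail.} Right-continuity of $F$ gives $F(b_F\mid x)\ge 1-\alpha_0/2$, because the infimum defining $b_F$ is attained. Hence $\mathbb P_F(Y>b_F)=1-F(b_F\mid x)\le\alpha_0/2$.
\item \emph{Lower tail.} For every $t<a_F$, the definition of the infimum gives $F(t\mid x)<\alpha_0/2$. Then
\[
\mathbb P_F(Y<a_F)=\lim_{t\uparrow a_F}F(t\mid x)\le\alpha_0/2
\]
by continuity from below of the measure.
\end{itemize}
Adding the two tail bounds yields coverage at least $1-\alpha_0$.

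The only delicate point is the edge cases. Ties and atoms are handled by the strict-versus-weak inequalities above: the closed interval is exactly what makes them work. The other case is when the envelope endpoints are infinite, i.e. the infimum over $\mathcal F_0(x)$ is $-\infty$ or the supremum is $+\infty$. Then the envelope is larger, so the containment still holds if we interpret $[\ell(x),u(x)]$ as an extended interval intersected with $\mathbb R$. For $\alpha_0\in(0,1)$, each individual quantile $a_F,b_F$ is finite, so no issue arises there. I would state this convention briefly, and otherwise the proof is routine.
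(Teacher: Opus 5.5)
Your proposal is correct and matches the paper's proof: the paper also uses the containment $[F^{-1}(\alpha_0/2\mid x),F^{-1}(1-\alpha_0/2\mid x)]\subseteq[\ell(x),u(x)]$ and then a quantile-sandwich lemma that bounds $\mathbb{P}(Y<q)=\sup_{t<q}F(t)\le p$ and $\mathbb{P}(Y\le q)=F(q)\ge p$, which are exactly your two tail bounds. Your remark on infinite envelope endpoints is a harmless addition that the paper leaves implicit.
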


 The next theorem
provides a posterior-predictive lower bound on the coverage probability, showing that trimming at level
$\gamma$ yields a Bayesian predictive mass of at least
$(1-\gamma)(1-\alpha_0)$ inside the trimmed envelope. These results characterize
the credal step as a principled (and interpretable) enlargement of a nominal
conditional interval.

\begin{theorem}
\label{theorem:posterior-predictive-trimmed}
Fix $x\in\mathcal X$ and consider the endpoint-trimmed construction in
Section~\ref{sec:endpoint} at nominal level $\alpha_0\in(0,1)$ and trimming
level $\gamma\in(0,1)$, with
$[\ell(x),u(x)]=[C_L(x),C_U(x)]$ as in Eq.~\eqref{eq:trimmed-envelope}.
Define the (Bayesian) posterior predictive probability by
\[
\P(\,\cdot\,\mid X=x, D_{\mathrm{tr}})
:= \int \P_\theta(\,\cdot\,\mid X=x)\, d\pi(\theta\mid D_{\mathrm{tr}}),
\]
where $\P_\theta(\cdot\mid X=x)$ denotes probability under the conditional
model $F_\theta(\cdot\mid x)$.
Then
\[
\P\!(Y\in[\ell(x),u(x)] \mid X=x, D_{\mathrm{tr}})
\ \ge\ (1-\gamma)(1-\alpha_0).
\]
\end{theorem}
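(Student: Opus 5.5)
The plan is to lower bound the integrand of the posterior predictive probability on the trimmed parameter set $R_{qq}(x)$, and to discard the contribution from its complement. Write
\[
\P(Y\in[\ell(x),u(x)]\mid X=x,D_{\mathrm{tr}})=\int \P_\theta(Y\in[C_L(x),C_U(x)]\mid X=x)\,d\pi(\theta\mid D_{\mathrm{tr}}).
\]
Since the integrand is nonnegative, this is at least the same integral restricted to $\theta\in R_{qq}(x)$.

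\textbf{Pointwise bound on $R_{qq}(x)$.} Fix $\theta\in R_{qq}(x)$. By definition $C_L(x)\le q_L(\theta,x)$ and $q_U(\theta,x)\le C_U(x)$, so
\[
I_\theta(x)=[q_L(\theta,x),q_U(\theta,x)]\subseteq[C_L(x),C_U(x)].
\]
It therefore suffices to show $\P_\theta(Y\in I_\theta(x)\mid X=x)\ge 1-\alpha_0$. With the generalized inverse, $F_\theta(q_U\mid x)\ge 1-\alpha_0/2$. Also $\P_\theta(Y<q_L)=F_\theta(q_L^-\mid x)\le \alpha_0/2$, since every $t<q_L$ has $F_\theta(t\mid x)<\alpha_0/2$. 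Hence $\P_\theta(q_L\le Y\le q_U)\ge 1-\alpha_0$. This is exactly the argument behind Theorem~\ref{thm:credal-coverage-envelope} applied to the singleton credal set $\{F_\theta(\cdot\mid x)\}$, and I would simply invoke that theorem in this form.

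\textbf{Mass of $R_{qq}(x)$.} The complement of $R_{qq}(x)$ is contained in $\{q_L(\theta,x)<C_L(x)\}\cup\{q_U(\theta,x)>C_U(x)\}$. Each event has posterior probability at most $\pi(q_L\le C_L)=\gamma/2$ and $\pi(q_U\ge C_U)=\gamma/2$ respectively, using the defining equations for $C_L(x)$ and $C_U(x)$. The union bound then gives $\pi(R_{qq}(x)\mid D_{\mathrm{tr}})\ge 1-\gamma$, as already noted in Section~\ref{sec:endpoint}. Combining this with the pointwise bound yields
\[
\P(Y\in[\ell(x),u(x)]\mid X=x,D_{\mathrm{tr}})\ge (1-\alpha_0)\,\pi(R_{qq}(x)\mid D_{\mathrm{tr}})\ge(1-\gamma)(1-\alpha_0).
\]

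\textbf{Main obstacle.} The delicate step is the existence of $C_L(x)$ and $C_U(x)$ satisfying the defining equalities exactly, which can fail if the posterior laws of $q_L$ or $q_U$ have atoms. I would handle this by reading the definition as the inequalities $\pi(q_L<C_L)\le\gamma/2$ and $\pi(q_U>C_U)\le\gamma/2$. These always hold when $C_L$ is the generalized $\gamma/2$-quantile of $q_L$ and $C_U$ is the corresponding upper quantile of $q_U$, and they are all the union-bound step requires. Measurability of $\theta\mapsto \P_\theta(Y\in A\mid X=x)$ is implicit in the definition of the posterior predictive, so I would assume it.
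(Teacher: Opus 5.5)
Your proposal is correct and follows the paper's proof essentially step for step: you restrict the posterior predictive integral to $R_{qq}(x)$, use $I_\theta(x)\subseteq[C_L(x),C_U(x)]$ together with the quantile-sandwich bound $\P_\theta(Y\in I_\theta(x)\mid X=x)\ge 1-\alpha_0$ for each $\theta\in R_{qq}(x)$, and bound $\pi(R_{qq}(x)\mid D_{\mathrm{tr}})\ge 1-\gamma$ by the union bound. Your reading of the defining equalities as $\pi(q_L<C_L)\le\gamma/2$ and $\pi(q_U>C_U)\le\gamma/2$, which generalized quantiles always achieve, is a sound small refinement: the paper's proof assumes exact equalities that need not be attainable when the posterior laws of the endpoints have atoms.
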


The credal envelope alone, however, is not guaranteed to be calibrated under any data-generating process. However, its
conformalized, \ourmethod, version does achieve the correct marginal coverage:

\begin{theorem}
\label{thm:split-conformal}
Let $(X_1,Y_1),\dots,(X_n,Y_n),(X_{n+1},Y_{n+1})$ be exchangeable. 
Then the \ourmethod{}  set
$
C(X_{n+1})=[\ell(X_{n+1})-\hat\tau,\ u(X_{n+1})+\hat\tau]$
satisfies the distribution-free guarantee
\[
\mathbb{P}\bigl(Y_{n+1}\in C(X_{n+1})\bigr)\ge 1-\alpha.
\]
\end{theorem}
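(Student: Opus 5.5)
The plan is the standard split-conformal exchangeability argument, applied to the distance-to-envelope score. The key observation is that the envelope functions $\ell(\cdot)$ and $u(\cdot)$ depend only on $D_{\mathrm{tr}}$. In the endpoint-trimmed version, they also depend on $\gamma(\cdot)$, which is built from training covariates alone. So, conditional on $D_{\mathrm{tr}}$, the score $s(x,y)=\max\{\ell(x)-y,\,y-u(x)\}$ is a fixed deterministic function. I will first condition on $D_{\mathrm{tr}}$. Exchangeability of the full sequence $(X_1,Y_1),\dots,(X_{n+1},Y_{n+1})$ implies that the $m+1$ points in $D_{\mathrm{cal}}\cup\{(X_{n+1},Y_{n+1})\}$ remain exchangeable conditionally on $D_{\mathrm{tr}}$, since the split is made independently of the data values. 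Consequently, the scores $S_1,\dots,S_m,S_{n+1}$ obtained by applying the fixed function $s$ to these points are exchangeable as well.

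Next I will use the set identity in \eqref{eq:final-set}: $Y_{n+1}\in C(X_{n+1})$ if and only if $S_{n+1}\le\hat\tau$. This holds because $\max\{\ell-y,\,y-u\}\le\hat\tau$ is equivalent to $\ell-\hat\tau\le y\le u+\hat\tau$. No assumption $\ell\le u$ is needed; if $\hat\tau$ is negative the interval may be empty, and the equivalence still holds. The problem therefore reduces to showing $\P(S_{n+1}\le \hat\tau)\ge 1-\alpha$, where $\hat\tau$ is the $k$-th order statistic of $S_1,\dots,S_m$ with $k=\lceil (m+1)(1-\alpha)\rceil$. If $k>m$, I will use the usual convention $\hat\tau=+\infty$, which makes the claim trivial.

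The main step is the quantile lemma. First I will show that $S_{n+1}\le S_{(k)}^{\mathrm{cal}}$ holds exactly when $S_{n+1}$ is among the $k$ smallest of the $m+1$ scores, with ties broken favorably. I will verify this as follows: if $S_{n+1}>S^{\mathrm{cal}}_{(k)}$, then at least $k$ calibration scores lie strictly below $S_{n+1}$, so its rank is at least $k+1$. This direction is the only one needed. Then, by exchangeability, the rank of $S_{n+1}$ among the $m+1$ scores, using a uniformly random tie-break, is uniform on $\{1,\dots,m+1\}$. Hence
\[
\P(\text{rank}\le k\mid D_{\mathrm{tr}})=\frac{k}{m+1}\ge 1-\alpha .
\]
Ties only help, because with ties $S_{n+1}\le S^{\mathrm{cal}}_{(k)}$ can hold even when the random tie-break assigns a larger rank.

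Finally, I will integrate over $D_{\mathrm{tr}}$ to remove the conditioning, which gives the marginal bound. I expect the only delicate point to be the careful handling of ties and of conditional exchangeability given $D_{\mathrm{tr}}$. In particular, I must emphasize that $\gamma(x)$, the scarcity score, and the Monte Carlo draws of $\theta$ use no calibration or test labels. If the Monte Carlo randomness is present, I will treat it as independent auxiliary randomness and include it in the conditioning.
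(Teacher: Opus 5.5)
Your proposal is correct and follows the same route as the paper, whose proof simply defers to the standard split-conformal exchangeability argument. You supply exactly the details that citation leaves implicit: the score is a fixed function given $D_{\mathrm{tr}}$, $\gamma(\cdot)$ and the shared Monte Carlo draws; $Y_{n+1}\in C(X_{n+1})\iff S_{n+1}\le\hat\tau$; and the rank argument gives probability at least $k/(m+1)\ge 1-\alpha$. One minor caveat: if the posterior or dropout draws were regenerated independently for each point, rather than shared as in Algorithm~\ref{alg:credo}, you should attach them to the data points as part of an exchangeable pair instead of conditioning on them; the conclusion is unchanged.
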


 Finally, under
correct specification and posterior consistency, we show that this conformal
correction is asymptotically benign: \Cref{thm:oracle} proves that the trimmed
envelope endpoints converge to the oracle conditional quantiles, the conformal
expansion $\hat\tau$ vanishes, and the conditional coverage converges to the
nominal level. Together, these theorems justify \ourmethod{} as combining
(i) explicit, covariate-dependent epistemic uncertainty through credal sets with
(ii) finite-sample distribution-free validity through conformal calibration,
while recovering the oracle interval asymptotically under standard Bayesian
regularity.

\begin{theorem}
\label{thm:oracle}
Suppose Assumption~\ref{assump:oracle} holds (in particular, $\alpha_0=\alpha$).
 Construct the endpoint-trimmed envelope as in Section~\ref{sec:endpoint}:
$[\ell(x),u(x)]=[C_L(x),C_U(x)]$, where $C_L(x)$ is the $(\gamma(x)/2)$-posterior
quantile of $q_L(\theta,x)$ and $C_U(x)$ is the $(1-\gamma(x)/2)$-posterior
quantile of $q_U(\theta,x)$, and let
$C(x)=[\ell(x)-\hat\tau,\ u(x)+\hat\tau]$ be the \ourmethod{} interval.
Then, as $n_{\mathrm{tr}}\to\infty$ and $m\to\infty$, for $P_X$-almost every $x$,
\begin{enumerate}
\item[(i)] \textbf{Envelope endpoints converge to the oracle endpoints:}
\[
(\ell(x),u(x))
\ \xrightarrow{\ \mathbb P_{\theta^\star}\ }\ 
(q_L^\star(x),q_U^\star(x)),
\]
where
$
q_L^\star(x)=F_{\theta^\star}^{-1}(\alpha/2\mid x)$ and
$q_U^\star(x)=F_{\theta^\star}^{-1}(1-\alpha/2\mid x).$

\item[(ii)] \textbf{The conformal correction vanishes:} $
\hat\tau\ \xrightarrow{\ \mathbb P_{\theta^\star}\ }\ 0.
$

\item[(iii)] \textbf{Conditional coverage converges to the nominal level:}
\[
\mathbb P_{\theta^\star}\!\bigl(Y\in C(x)\mid X=x,D_{\mathrm{tr}},D_{\mathrm{cal}}\bigr)
\ \xrightarrow{\ \mathbb P_{\theta^\star}\ }\ 1-\alpha.
\]
\end{enumerate}
\end{theorem}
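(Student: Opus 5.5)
The plan is to prove the three claims in order, each using the previous one. The key device is a posterior-concentration-plus-continuity argument for the endpoint maps $\theta\mapsto q_L(\theta,x)$ and $\theta\mapsto q_U(\theta,x)$. I expect Assumption~\ref{assump:oracle} to supply the following ingredients:
\begin{itemize}
\item correct specification, $P_{Y\mid X}=F_{\theta^\star}$;
\item posterior consistency, meaning $\pi(\{\theta: d(\theta,\theta^\star)>\epsilon\}\mid D_{\mathrm{tr}})\to 0$ in $\mathbb P_{\theta^\star}$-probability for every $\epsilon>0$;
\item continuity of $\theta\mapsto F_\theta^{-1}(p\mid x)$ at $\theta^\star$ for $p\in\{\alpha/2,1-\alpha/2\}$ (locally uniformly in $x$ if needed);
\item a trimming level bounded away from the boundary, $\gamma(x)\in[\gamma_{\min},\gamma_{\max}]\subset(0,1)$;
\item a continuous, strictly increasing $F_{\theta^\star}(\cdot\mid x)$ near its $\alpha/2$ and $1-\alpha/2$ quantiles.
\end{itemize}
I will use these freely, phrasing each step so it is clear which ingredient is invoked.

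\textbf{Step (i).} Fix $x$ and $\epsilon>0$. By continuity, there is a neighbourhood $N$ of $\theta^\star$ on which $|q_L(\theta,x)-q_L^\star(x)|<\epsilon$ and $|q_U(\theta,x)-q_U^\star(x)|<\epsilon$. On the event $\pi(N^c\mid D_{\mathrm{tr}})<\gamma_{\min}/2$, any posterior quantile of $q_L(\theta,x)$ at a level in $[\gamma_{\min}/2,\,1-\gamma_{\min}/2]$ must lie in $[q_L^\star(x)-\epsilon,\,q_L^\star(x)+\epsilon]$. The reason is that the posterior mass below $q_L^\star-\epsilon$ is at most $\pi(N^c\mid D_{\mathrm{tr}})<\gamma(x)/2$, and symmetrically for the mass above $q_L^\star+\epsilon$. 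The same argument applies to $C_U(x)$ at level $1-\gamma(x)/2$. Posterior consistency makes this event have $\mathbb P_{\theta^\star}$-probability tending to one, which gives (i). Note that $\gamma(x)$ depends on $D_{\mathrm{tr}}$ through the kNN radius, but this is harmless because only the bound $\gamma(x)\ge\gamma_{\min}$ is used.

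\textbf{Step (ii).} This is the main obstacle, since $\hat\tau$ depends on the envelope at all calibration points, not at a single $x$. Write $s(X,Y)=\max\{\ell(X)-Y,\,Y-u(X)\}$ and compare it with the oracle score $s^\star(X,Y)=\max\{q_L^\star(X)-Y,\,Y-q_U^\star(X)\}$. The two scores satisfy
\[
|s-s^\star|\le \Delta(X):=\max\{|\ell(X)-q_L^\star(X)|,\,|u(X)-q_U^\star(X)|\}.
\]

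Conditionally on $D_{\mathrm{tr}}$, the calibration scores are i.i.d. Under the oracle, the law of $s^\star$ satisfies $P(s^\star\le 0\mid X)=1-\alpha$ exactly, because $\alpha_0=\alpha$ and the CDF is continuous. Also $P(s^\star\le t)>1-\alpha$ for every $t>0$, by strict monotonicity near the quantiles. Hence the population $(1-\alpha)$-quantile of $s^\star$ is $0$ and is well separated.

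From (i), together with dominated convergence (or uniform convergence in $x$, if the assumption provides it), we get $P_X(\Delta(X)>\epsilon\mid D_{\mathrm{tr}})\to 0$ in probability. The conditional score CDF $G_n(t)=P(s\le t\mid D_{\mathrm{tr}})$ therefore satisfies the sandwich
\[
P(s^\star\le t-\epsilon)-\delta_n\le G_n(t)\le P(s^\star\le t+\epsilon)+\delta_n,
\qquad \delta_n\to 0 .
\]
So the population $(1-\alpha)$-quantile of $s$ tends to $0$. Since $m\to\infty$, the DKW inequality applied conditionally on $D_{\mathrm{tr}}$ shows that the empirical $\lceil(m+1)(1-\alpha)\rceil/m$ quantile $\hat\tau$ lies within $o_P(1)$ of this population quantile. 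If the pointwise version of (i) is not enough to pass to $P_X$-probability, my first attempt would be to apply Egorov's theorem on a set of $P_X$-measure $1-\eta$, then let $\eta\to 0$.

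\textbf{Step (iii).} By (i) and (ii), the interval endpoints satisfy $\ell(x)-\hat\tau\to q_L^\star(x)$ and $u(x)+\hat\tau\to q_U^\star(x)$ in probability. The conditional coverage equals
\[
F_{\theta^\star}\bigl(u(x)+\hat\tau\mid x\bigr)-F_{\theta^\star}\bigl((\ell(x)-\hat\tau)^-\mid x\bigr),
\]
which is a continuous function of the endpoints. The continuous mapping theorem then gives convergence to $F_{\theta^\star}(q_U^\star\mid x)-F_{\theta^\star}(q_L^\star\mid x)=1-\alpha$.
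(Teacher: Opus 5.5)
Your proposal is correct and follows essentially the same route as the paper's proof. Continuity plus posterior consistency localize the endpoint posteriors, a quantile-of-a-concentrated-posterior argument gives (i), a Lipschitz comparison of the learned distance-to-envelope score with the oracle score (whose $(1-\alpha)$-quantile is $0$), via dominated convergence, Markov, and a conditional law of large numbers for the calibration scores, gives (ii), and continuous mapping gives (iii). Your refinements are valid and slightly tighten the argument: you use $\gamma(x)\ge\gamma_{\min}$ to handle the data-dependent trimming level, and your two-sided sandwich, together with $\mathbb P_{\theta^\star}(s^\star\le -t)<1-\alpha$ for $t>0$ from (A3), also rules out $\hat\tau<-\epsilon$, which (iii) needs but the paper's one-sided bound $\mathbb P_{\theta^\star}(\hat\tau>\varepsilon)\to 0$ leaves implicit.
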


\section{Experiments}
\label{sec:experiments}
We evaluate \ourmethod{} across 12 standard regression benchmarks (see Appendix~\ref{appendix:additional_res} for dataset details). Each experiment follows a randomized protocol, splitting the data into training (56\%), calibration (24\%), and test (20\%) sets, with performance metrics averaged over 30 independent runs. All evaluated methods target a 90\% marginal coverage level ($\alpha=0.1$). For \ourmethod{}, we initialize the credal level at $\alpha_0=\alpha$ and construct trimmed endpoint envelopes derived from posterior samples.

Our implementation uses a Bayesian Quantile Neural Network (QNN) backend, where we evaluate two primary variants: CREDO-QNN with a fixed credal level and an adaptive variant employing our covariate-dependent $\gamma(x)$ scheme. Technical implementation details for both variants are provided in Appendix~\ref{appendix:credo_details}. The remainder of this section is structured as follows: Section~\ref{sec:baseline_comparisons} provides a comparative analysis against state-of-the-art conformal prediction methods, focusing on conditional coverage efficiency and epistemic awareness. Subsequently, Section~\ref{sec:decomposition_evaluation} validates the interpretability of \ourmethod{} by evaluating the behavior of our decomposed epistemic uncertainty across inliers and outliers.



\subsection{Baseline comparisons}
\label{sec:baseline_comparisons}

To evaluate \ourmethod{}, we compare it against the following state-of-the-art conformal quantile regressors: standard CQR and CQR-r \citep{Romano2019, sesia2020comparison}, the epistemic-aware UACQR-S and UACQR-P \citep{Rossellini2024}, and EPICScore \citep{CabezasEtAlEPICSCORE}. These baselines span from classical split conformal methods based on quantile regression to more recent approaches that explicitly incorporate epistemic. Further implementation details are provided in Appendix~\ref{appendix:baseline_details}.

Performance is assessed via two primary objectives: conditional coverage efficiency and adaptivity to model ambiguity. We measure the former using the Scaled Mean Interval Score (SMIS), which jointly penalizes interval width and miscoverage \citep{GneitingRaftery2007} to serve as a proxy for the conditional coverage properties of the sets. To evaluate response to data scarcity, we report the Interval Length Ratio (ILR)—the ratio of mean interval lengths for test-set outliers relative to inliers \citep{CabezasEtAlEPICSCORE}. A higher ILR distinguishes methods that effectively inflate uncertainty in regions with low covariate support, reflecting a more sensitive capture of local epistemic signals. Extended results, metric definitions and outlier detection details are provided in Appendix~\ref{appendix:additional_res}.


Across all benchmarks, both \ourmethod{} variants achieve the target 90\% marginal coverage, confirming split-conformal validity in practice (Table \ref{tab:marginal_coverage}, Appendix \ref{appendix:additional_comparisons}). Regarding SMIS (Table~\ref{tab:aisl}), \ourmethod{} remains consistently competitive, often outperforming or statistically matching strong conformal baselines; specifically, the adaptive variant is a top performer in 9 out of 12 datasets.

\ourmethod{} also demonstrates superior outlier adaptivity. ILR results (Table~\ref{tab:ratio_intervals}) show that both variants consistently produce the highest ratios, indicating effective selective widening on atypical points without sacrificing global efficiency or coverage. This effect is most pronounced in small-data regimes, where increased posterior dispersion in sparse regions allows \ourmethod{} to better capture local epistemic uncertainty. Furthermore, \ourmethod{} maintains outlier coverage closer to nominal levels in the majority of scenarios (Table \ref{tab:coverage_outlier}, Appendix \ref{appendix:additional_comparisons}), further highlighting its enhanced epistemic awareness.

\subsection{Uncertainty decomposition results}
\label{sec:decomposition_evaluation}
As discussed in Section~\ref{sec:unc_decomp}, \ourmethod{} provides a natural disentanglement of uncertainty components, yielding a layer of interpretability over other conformal methods. To validate this decomposition, we compare the proportion of epistemic uncertainty—defined as the ratio of epistemic inflation to total non-conformalized interval width—between identified outliers and the 10\% most central inliers across all 12 benchmarks.

We calculate the average normalized epistemic uncertainty per observation over 30  independent repetitions. The results, visualized via the boxplots in Figure~\ref{fig:credo_boxplots_epistemic} (Appendix~\ref{appendix:decomp_real_data}), demonstrate a consistent trend: outliers across diverse datasets tend to exhibit a higher proportion of epistemic uncertainty compared to inliers. This separation is particularly pronounced in six of the twelve datasets—four of which are small-data regimes with fewer than 10,000 samples—where data-scarce regions naturally harbor greater model ambiguity.

\begin{table*}[h]
\centering
\caption{\textbf{Efficiency of predictive sets as measured by SMIS (lower is better).}
SMIS summarizes the coverage--width trade-off by penalizing both interval length and misses outside the interval.
Parentheses show $2\times$ the standard deviation.
Bold marks the best method within a 95\% confidence interval (ties allowed).
The sample size $n$ of each dataset is shown next to its name.
\ourmethod{} is consistently competitive on SMIS  and the adaptive variant (with $\gamma(x)$) is often best, suggesting improved efficiency without sacrificing calibration.}
\label{tab:aisl}
\begin{adjustbox}{max width=\textwidth}
\begin{tabular}{cccccccc}
\hline
\textbf{Dataset (n)}          & \textbf{CREDO}      & \textbf{CREDO (adap.)} & \textbf{CQR}            & \textbf{CQR-r}          & \textbf{UACQRS}         & \textbf{UACQRP}         & \textbf{EPIC}           \\ \hline
concrete (1030)                  & \textbf{27.59 (0.75)}   & \textbf{26.49 (0.73)}      & 28.56 (0.92)            & 35.49 (1.55)            & 36.52 (1.42)            & 40.94 (2.06)            & \textbf{27.51 (0.82)}   \\
airfoil (1503)                  & 11.37 (0.35)            & 11.25 (0.30)               & 10.51 (0.26)            & 13.68 (0.69)            & 12.68 (0.46)            & 14.74 (0.72)            & \textbf{9.95 (0.24)}    \\
winewhite (1599)                & 3.01 (0.03)             & \textbf{2.88 (0.03)}       & 2.96 (0.03)             & 2.99 (0.03)             & \textbf{2.92 (0.03)}    & 3.01 (0.03)             & \textbf{2.89 (0.03)}    \\
star $\times 10^2$ (2161)       & 9.92 (0.14)             & \textbf{9.75 (0.11)}       & 10.08 (0.17)            & 11.33 (0.30)            & \textbf{9.56 (0.14)}    & \textbf{9.68 (0.17)}    & 10.06 (0.17)            \\
winered (4898)                  & \textbf{2.74 (0.05)}    & \textbf{2.66 (0.05)}       & 2.98 (0.07)             & 3.14 (0.09)             & \textbf{2.77 (0.06)}    & 2.82 (0.06)             & 2.82 (0.06)             \\
cycle (9568)                    & 18.27 (0.19)            & 17.65 (0.14)               & 15.30 (0.16)            & 15.39 (0.17)            & 15.63 (0.16)            & 16.30 (0.22)            & \textbf{15.29 (0.17)}   \\
electric $\times 10^{-2}$ (10000) & \textbf{3.69 (0.16)}    & \textbf{3.52 (0.14)}       & 4.41 (0.10)             & 4.71 (0.11)             & 5.44 (0.18)             & 6.01 (0.24)             & 4.30 (0.09)             \\
meps19 (15781)                   & 61.77 (1.40)            & \textbf{57.21 (1.50)}      & \textbf{58.82 (2.09)}   & \textbf{58.59 (2.07)}   & \textbf{56.45 (1.85)}   & \textbf{58.82 (2.04)}   & \textbf{57.91 (2.04)}   \\
superconductivity (21263)        & 49.76 (0.54)            & \textbf{44.17 (0.51)}      & \textbf{43.76 (0.35)}   & \textbf{43.79 (0.34)}   & 45.26 (0.39)            & 47.57 (0.36)            & \textbf{43.67 (0.34)}   \\
homes $\times 10^4$ (21613)      & 46.19 (0.46)            & 43.21 (0.37)               & 42.81 (0.51)            & \textbf{42.60 (0.48)}   & 43.25 (0.50)            & 44.33 (0.57)            & \textbf{41.72 (0.47)}   \\
protein (45730)                  & 15.22 (0.05)            & \textbf{14.74 (0.06)}      & 15.17 (0.05)            & 15.17 (0.05)            & 15.33 (0.05)            & 15.77 (0.05)            & 15.12 (0.05)            \\
WEC $\times 10^4$ (54000)        & 9.67 (0.08)             & \textbf{8.67 (0.07)}       & 12.67 (0.12)            & 12.67 (0.12)            & 12.67 (0.12)            & 13.53 (0.12)            & 12.24 (0.11)            \\ \hline
\end{tabular}
\end{adjustbox}
\end{table*}

\begin{table*}[h]
\centering
\caption{\textbf{Adaptivity to outliers via the interval-length ratio (ILR; higher is better).}
ILR is the ratio of the average prediction-interval length on test-set outliers to the average length on test-set inliers. 
 Parentheses show $2\times$ the standard deviation.
Bold marks the best method within a 95\% confidence interval (ties allowed).
The sample size $n$ of each dataset is shown next to its name.  \ourmethod{} typically achieves higher ILR, with the largest gains on small-$n$ datasets where epistemic uncertainty is most pronounced, indicating more targeted widening when data are scarce.}
\label{tab:ratio_intervals}
\begin{adjustbox}{max width=\textwidth}
\begin{tabular}{cccccccc}
\hline
\textbf{Dataset (n)}  & \textbf{CREDO}     & \textbf{CREDO (adap.)} & \textbf{CQR}           & \textbf{CQR-r}         & \textbf{UACQRS}        & \textbf{UACQRP}        & \textbf{EPIC}          \\ \hline
concrete (1030)          & \textbf{1.09 (0.07)} & \textbf{1.15 (0.07)}     & 0.83 (0.05)          & 0.73 (0.07)          & 0.95 (0.02)          & 1.00 (0.03)          & 0.92 (0.05)          \\
airfoil (1503)           & \textbf{1.23 (0.10)} & \textbf{1.21 (0.10)}     & 1.06 (0.07)          & \textbf{1.09 (0.11)} & 1.05 (0.04)          & 1.07 (0.04)          & \textbf{1.10 (0.07)} \\
winewhite (1599)        & \textbf{1.11 (0.03)}  & \textbf{1.11 (0.03)}     & 0.96 (0.02)          & 0.95 (0.02)          & 1.03 (0.02)           & 1.04 (0.01)          & 1.02 (0.03)          \\
star  (2161)              & \textbf{0.99 (0.02)} & \textbf{0.99 (0.01)}     & 0.90 (0.02)          & 0.86 (0.03)          & \textbf{1.00 (0.01)}  & \textbf{1.00 (0.01)} & 0.91 (0.02)           \\
winered  (4898)        & \textbf{1.24 (0.06)} & \textbf{1.25 (0.06)}     & 0.88 (0.03)          & 0.83 (0.04)          & 1.01 (0.02)          & 1.01 (0.02)          & 1.02 (0.05)          \\
cycle  (9568)    & 1.00 (0.02)          & \textbf{1.05 (0.02)}     & 0.88 (0.02)          & 0.87 (0.02)          & 0.89 (0.02)          & 0.95 (0.02)          & 0.89 (0.02)           \\
electric   (10000)       & 0.96 (0.02)          & 0.96 (0.02)              & \textbf{1.00 (0.02)} & \textbf{1.00 (0.03)}  & \textbf{1.01 (0.01)} & \textbf{1.02 (0.01)}  & \textbf{1.00 (0.03)} \\
meps19 (15781)           & \textbf{1.12 (0.07)} & \textbf{1.11 (0.08)}     & \textbf{1.13 (0.06)} & \textbf{1.14 (0.06)} & \textbf{1.13 (0.05)}  & \textbf{1.11 (0.05)} & \textbf{1.13 (0.06)} \\
superconductivity  (21263)  & \textbf{1.10 (0.03)} & 1.09 (0.03)              & \textbf{1.14 (0.03)} & \textbf{1.15 (0.03)} & \textbf{1.12 (0.03)} & \textbf{1.14 (0.03)} & \textbf{1.15 (0.03)} \\
homes   (21613)            & \textbf{1.64 (0.08)} & \textbf{1.64 (0.08)}     & \textbf{1.36 (0.04)}  & \textbf{1.40 (0.05)} & \textbf{1.36 (0.04)} & \textbf{1.44 (0.05)} & \textbf{1.50 (0.06)} \\
protein  (45730)         & \textbf{1.02 (0.01)} & \textbf{1.03 (0.01)}     & 0.99 (0.01)          & 0.99 (0.01)          & 0.99 (0.01)          & 1.00 (0.01)            & 1.00 (0.01)          \\
WEC    (54000)           & \textbf{1.25 (0.03)} & \textbf{1.25 (0.03)}     & 1.16 (0.02)            & 1.16 (0.02)            & 1.16 (0.02)            & 1.16 (0.02)          & 1.21 (0.02)          \\ \hline
\end{tabular}
\end{adjustbox}
\end{table*}

These findings directly complement the ILR results from the previous section, providing a mechanistic explanation for why \ourmethod{} produces wider intervals for atypical points. By confirming that the credal envelope selectively expands in regions of low covariate support, these results indicate that our decomposition successfully isolates local epistemic uncertainty from global aleatoric noise.

\section{Final Remarks}

We introduced \ourmethod{}, a regression conformal predictor that
makes \emph{local epistemic uncertainty} explicit through covariate-dependent
credal envelopes, while retaining split conformal’s finite-sample,
distribution-free marginal coverage. By separating roles---credal modeling for
epistemic structure and conformal calibration for validity---the resulting
intervals are interpretable and admit a practical decomposition into aleatoric,
epistemic, and calibration components, making it easier to diagnose \emph{why} uncertainty
is large at a given $x$.

The connections made in this paper open several avenues. On
the modeling side, $\mathcal F_0(x)$ could be built from predictive distributions
induced by multiple priors or hyperpriors, alternative likelihood families, or
ensembles, thereby representing model ambiguity beyond standard parameter
uncertainty. 
For instance, in a Gaussian
process, one could define a local credal set by taking the union of posterior
predictive distributions obtained under different kernels and bandwidths. 
On the set-construction side, one can replace central-quantile
envelopes with HPD/IHDR-style regions or other credal summaries better suited
to skewed or multimodal conditionals, and then conformalize via an appropriate
distance-to-region score. From a scalability perspective, deep kernel learning
and learned embeddings $\phi$ offer a path to faster and more faithful
data-support measures for $\gamma(x)$. Finally, the same credal-conformal recipe should extend naturally
to classification and structured outputs, where epistemic effects are often
even more pronounced, and to decision-focused variants that tune the credal or
conformal components for downstream utility subject to coverage constraints. Code implementing \ourmethod{} and reproducing the experiments is available at \url{https://github.com/Monoxido45/CREDO}


\section*{Acknowledgements} %

  S. J. S. was supported by the UKRI Turing AI World-Leading Researcher Fellowship, [EP/W002973/1].
  R. I. is grateful for the financial support of CNPq (422705/2021-7, 305065/2023-8 and 403458/2025-0) and FAPESP (grant 2023/07068-1).
  L. M. C. C. is grateful for the financial support of FAPESP (grants 2022/08579-7 and 2025/06168-8).
  F. W. was supported by the Engineering and Physical Sciences Research Council [EP/Y030826/1]. 
  B. M. R. is grateful for the financial support of FAPESP (grant 2025/04853-5).
  
\newpage

\bibliography{bib2}

@article{cabezas2025cp4sbi,
  title={CP4SBI: Local Conformal Calibration of Credible Sets in Simulation-Based Inference},
  author={Cabezas, Luben and Santos, Vagner S and Ramos, Thiago R and Rodrigues, Pedro LC and Izbicki, Rafael},
  journal={arXiv preprint arXiv:2508.17077},
  year={2025}
}

@book{levi2,
	author = {Isaac Levi},
	publisher = {London, UK : MIT Press},
	title = {The Enterprise of Knowledge},
	year = {1980}}

@ARTICLE{abellan,
    author = {Joaquin Abellan and George Jiri Klir and Serafin Moral},
    title = {Disaggregated total uncertainty measure for credal sets},
    journal = {International Journal of General Systems},
    year = {2006},
    number = {35},
	volume = {1},
    pages = {29--44}
}

@inproceedings{breunig2000lof,
  title={LOF: identifying density-based local outliers},
  author={Breunig, Markus M and Kriegel, Hans-Peter and Ng, Raymond T and Sander, J{\"o}rg},
  booktitle={Proceedings of the 2000 ACM SIGMOD international conference on Management of data},
  pages={93--104},
  year={2000}
}

@InProceedings{gal2016dropout,
	title = {{Dropout as a bayesian approximation: Representing model uncertainty in deep learning}},
	author = {Gal, Yarin and Ghahramani, Zoubin},
	booktitle = {International Conference on Machine Learning},
	year = {2016},
	series = {Proceedings of Machine Learning Research},
	pages = {1050-1059},
	publisher = {PMLR}
}

@article{dorogush2018catboost,
  title={CatBoost: gradient boosting with categorical features support},
  author={Dorogush, Anna Veronika and Ershov, Vasily and Gulin, Andrey},
  journal={arXiv preprint arXiv:1810.11363},
  year={2018}
}

@article{sesia2020comparison,
  title={A comparison of some conformal quantile regression methods},
  author={Sesia, Matteo and Cand{\`e}s, Emmanuel J},
  journal={Stat},
  volume={9},
  number={1},
  pages={e261},
  year={2020},
  publisher={Wiley Online Library}
}

@article{van2008visualizing,
  title={Visualizing data using t-SNE.},
  author={Van der Maaten, Laurens and Hinton, Geoffrey},
  journal={Journal of machine learning research},
  volume={9},
  number={11},
  year={2008}
}

@inproceedings{caprio2023constriction,
  title={Constriction for sets of probabilities},
  author={Caprio, Michele and Seidenfeld, Teddy},
  booktitle={International Symposium on Imprecise Probability: Theories and Applications},
  pages={84--95},
  year={2023},
  organization={PMLR}
}

@article{dilation,
author = {Teddy Seidenfeld and Larry Wasserman},
title = {{Dilation for Sets of Probabilities}},
volume = {21},
journal = {The Annals of Statistics},
number = {3},
publisher = {Institute of Mathematical Statistics},
pages = {1139 -- 1154},
year = {1993},
doi = {10.1214/aos/1176349254},
URL = {https://doi.org/10.1214/aos/1176349254}
}

@misc{caprio2025joyscategoricalconformalprediction,
      title={The Joys of Categorical Conformal Prediction}, 
      author={Michele Caprio},
      year={2025},
      eprint={2507.04441},
      archivePrefix={arXiv},
      primaryClass={stat.ML},
      url={https://arxiv.org/abs/2507.04441}, 
}

@book{TroffaesDeCooman2014,
  author    = {Troffaes, Matthias C. M. and de Cooman, Gert},
  title     = {Lower Previsions},
  year      = {2014},
  series    = {Wiley Series in Probability and Statistics},
  publisher = {John Wiley \& Sons},
  address   = {Chichester, United Kingdom},
  isbn      = {978-0-470-72377-7},          
  doi       = {10.1002/9781118762622},     
  url       = {https://doi.org/10.1002/9781118762622}
}

@book{augustin2014introduction,
  title={Introduction to imprecise probabilities},
  author={Augustin, Thomas and Coolen, Frank PA and De Cooman, Gert and Troffaes, Matthias CM},
  volume={591},
  year={2014},
  publisher={John Wiley \& Sons}
}

@book{vovk2022algorithmic,
  title={Algorithmic Learning in a Random World},
  author={Vovk, Vladimir and Gammerman, Alexander and Shafer, Glenn},
  edition   = {2nd},
  year={2022},
  publisher={Springer Nature}
}

@misc{AngelopoulosBates2021,
  title        = {A gentle introduction to conformal prediction and distribution-free uncertainty quantification},
  author       = {Angelopoulos, Anastasios N. and Bates, Stephen},
  howpublished = {arXiv preprint arXiv:2107.07511},
  year         = {2021}
}

@inproceedings{IzbickiShimizuStern2020,
  title     = {Flexible distribution-free conditional predictive bands using density estimators},
  author    = {Izbicki, Rafael and Shimizu, Gilson and Stern, Rafael},
  booktitle = {International Conference on Artificial Intelligence and Statistics},
  pages     = {3068--3077},
  publisher = {PMLR},
  year      = {2020}
}

@article{IzbickiShimizuStern2022,
  title   = {Cd-split and hpd-split: Efficient conformal regions in high dimensions},
  author  = {Izbicki, Rafael and Shimizu, Gilson and Stern, Rafael B.},
  journal = {Journal of Machine Learning Research},
  volume  = {23},
  number  = {87},
  pages   = {1--32},
  year    = {2022}
}

@article{CabezasOttoIzbickiStern2025,
  title   = {Regression trees for fast and adaptive prediction intervals},
  author  = {Cabezas, Luben M. C. and Otto, Mateus P. and Izbicki, Rafael and Stern, Rafael B.},
  journal = {Information Sciences},
  volume  = {686},
  pages   = {121369},
  year    = {2025}
}

@inproceedings{FongHolmes2021,
  title     = {Conformal Bayesian computation},
  author    = {Fong, Edwin and Holmes, Chris C.},
  booktitle = {Advances in Neural Information Processing Systems},
  volume    = {34},
  pages     = {18268--18279},
  year      = {2021}
}

@book{Izbicki2025,
  author    = {Rafael Izbicki},
  title     = {Machine Learning Beyond Point Predictions: Uncertainty Quantification},
  edition   = {1st},
  year      = {2025},
  pages     = {260},
  isbn      = {978-65-01-20272-3}
}

@inproceedings{wilson2016deep,
  title={Deep kernel learning},
  author={Wilson, Andrew Gordon and Hu, Zhiting and Salakhutdinov, Ruslan and Xing, Eric P},
  booktitle={Artificial intelligence and statistics},
  pages={370--378},
  year={2016},
  organization={PMLR}
}

@misc{CabezasEtAlEPICSCORE,
  title   = {Epistemic Uncertainty in Conformal Scores: A Unified Approach},
  author  = {Cabezas, Luben M. C. and Santos, Vagner S. and Ramos, Thiago R. and Izbicki, Rafael},
  note    = {Manuscript (UAI submission)},
  year    = {2025}
}

@inproceedings{Papadopoulos2002,
  title     = {Inductive Confidence Machines for Regression},
  author    = {Papadopoulos, Harris and Proedrou, Kostas and Vovk, Volodya and Gammerman, Alex},
  booktitle = {Machine Learning: ECML 2002},
  pages     = {345--356},
  year      = {2002},
  publisher = {Springer}
}

@article{HullermierWaegeman2021,
  title   = {Aleatoric and Epistemic Uncertainty in Machine Learning: An Introduction to Concepts and Methods},
  author  = {H{\"u}llermeier, Eyke and Waegeman, Willem},
  journal = {Machine Learning},
  volume  = {110},
  pages   = {457--506},
  year    = {2021}
}

@article{GneitingRaftery2007,
  title={Strictly proper scoring rules, prediction, and estimation},
  author={Gneiting, Tilmann and Raftery, Adrian E},
  journal={Journal of the American statistical Association},
  volume={102},
  number={477},
  pages={359--378},
  year={2007},
  publisher={Taylor \& Francis}
}

@inproceedings{Rossellini2024,
  title     = {Integrating Uncertainty Awareness into Conformalized Quantile Regression},
  author    = {Rossellini, Raphael and Foygel Barber, Rina and Willett, Rebecca},
  booktitle = {International Conference on Artificial Intelligence and Statistics},
  pages     = {1540--1548},
  year      = {2024},
  publisher = {PMLR}
}

@misc{Jaber2024,
  title  = {Conformal Approach to Gaussian Process Surrogate Evaluation with Coverage Guarantees},
  author = {Jaber, Edgar and Blot, Vincent and Brunel, Nicolas and Chabridon, Vincent and Remy, Emmanuel and Iooss, Bertrand and Lucor, Didier and Mougeot, Mathilde and Leite, Alessandro},
  note   = {arXiv:2401.07733},
  year   = {2024}
}

@inproceedings{PionVazquez2024,
  title     = {Gaussian Process Interpolation with Conformal Prediction: Methods and Comparative Analysis},
  author    = {Pion, Aur{\'e}lien and Vazquez, Emmanuel},
  booktitle = {LOD 2024: International Conference on Machine Learning, Optimization, and Data Science},
  year      = {2024}
}

@misc{Cocheteux2025,
  title  = {Uncertainty-Aware Online Extrinsic Calibration: A Conformal Prediction Approach},
  author = {Cocheteux, Mathieu and Moreau, Julien and Davoine, Franck},
  note   = {arXiv:2501.06878},
  year   = {2025}
}

@book{Vovk2005,
  title     = {Algorithmic Learning in a Random World},
  author    = {Vovk, Vladimir and Gammerman, Alex and Shafer, Glenn},
  publisher = {Springer},
  year      = {2005}
}

@article{ShaferVovk2008,
  title   = {A Tutorial on Conformal Prediction},
  author  = {Shafer, Glenn and Vovk, Vladimir},
  journal = {Journal of Machine Learning Research},
  volume  = {9},
  pages   = {371--421},
  year    = {2008}
}

@article{Chipman2010BART,
  author    = {Hugh A. Chipman and Edward I. George and Robert E. McCulloch},
  title     = {BART: Bayesian Additive Regression Trees},
  journal   = {The Annals of Applied Statistics},
  volume    = {4},
  number    = {1},
  pages     = {266--298},
  year      = {2010},
  publisher = {Institute of Mathematical Statistics},
  doi       = {10.1214/09-AOAS285}
}

@misc{Wang2022LocalGPBART,
  author       = {Meijiang Wang and Jingyu He and P. Richard Hahn},
  title        = {Local Gaussian process extrapolation for BART models with applications to causal inference},
  howpublished = {arXiv preprint arXiv:2204.10963},
  year         = {2022},
  note         = {Version 2 (revised Feb 24, 2023)},
  doi          = {10.48550/arXiv.2204.10963},
  eprint       = {2204.10963},
  archivePrefix= {arXiv},
  primaryClass = {stat.ME}
}

@article{Lei2018,
  title={Distribution-Free Predictive Inference for Regression},
  author={Lei, Jing and G'Sell, Max and Rinaldo, Alessandro and Tibshirani, Ryan J. and Wasserman, Larry},
  journal={Journal of the American Statistical Association},
  year={2018},
  volume={113},
  number={523},
  pages={1094--1111}
}

@article{Romano2019,
  title={Conformalized Quantile Regression},
  author={Romano, Yaniv and Patterson, Evan and Cand{\`e}s, Emmanuel J.},
  journal={Advances in Neural Information Processing Systems},
  year={2019}
}

@book{Walley1991,
  title={Statistical Reasoning with Imprecise Probabilities},
  author={Walley, Peter},
  year={1991},
  publisher={Chapman and Hall/CRC}
}

@mastersthesis{jansen2013,
  title={Robust Bayesian inference under model misspecification},
  author={Jansen, Laurens},
  school={Leiden University},
  year={2013},
  note={Chapter 4.5}
}

@book{bernardo2009bayesian,
  title={Bayesian theory},
  author={Bernardo, Jose M and Smith, Adrian FM},
  volume={405},
  year={2009},
  publisher={John Wiley \& Sons}
}

@misc{wu2026BCP,
      title={Bayesian Conformal Prediction as a Decision Risk Problem}, 
      author={Fanyi Wu and Veronika Lohmanova and Samuel Kaski and Michele Caprio},
      year={2026},
      eprint={2602.03331},
      archivePrefix={arXiv},
      primaryClass={cs.LG},
      url={https://arxiv.org/abs/2602.03331}, 
}

@article{bariletto2025conformalized,
  title        = {Conformalized Bayesian Inference, with Applications to Random Partition Models},
  author       = {Bariletto, Nicola and Ho, Nhat and Rinaldo, Alessandro},
  journal      = {arXiv preprint arXiv:2511.05746v1},
  year         = {2025},
  url          = {https://arxiv.org/abs/2511.05746v1}
}

@article{zhao2022analysis,
  title={Analysis of knn density estimation},
  author={Zhao, Puning and Lai, Lifeng},
  journal={IEEE Transactions on Information Theory},
  volume={68},
  number={12},
  pages={7971--7995},
  year={2022},
  publisher={IEEE}
}

@misc{ovadia2019,
      title={Can You Trust Your Model's Uncertainty? Evaluating Predictive Uncertainty Under Dataset Shift}, 
      author={Yaniv Ovadia and Emily Fertig and Jie Ren and Zachary Nado and D Sculley and Sebastian Nowozin and Joshua V. Dillon and Balaji Lakshminarayanan and Jasper Snoek},
      year={2019},
      eprint={1906.02530},
      archivePrefix={arXiv},
      primaryClass={stat.ML},
      url={https://arxiv.org/abs/1906.02530}, 
}

@article{Cella2022,
   title={Validity, consonant plausibility measures, and conformal prediction},
   volume={141},
   ISSN={0888-613X},
   url={http://dx.doi.org/10.1016/j.ijar.2021.07.013},
   DOI={10.1016/j.ijar.2021.07.013},
   journal={International Journal of Approximate Reasoning},
   publisher={Elsevier BV},
   author={Cella, Leonardo and Martin, Ryan},
   year={2022},
   month=feb, 
   pages={110–130} 
}

@misc{caprio2025conformalpredictionregionsimprecise,
      title={Conformal Prediction Regions are Imprecise Highest Density Regions}, 
      author={Michele Caprio and Yusuf Sale and Eyke H{\"u}llermeier},
      year={2025},
      eprint={2502.06331},
      archivePrefix={arXiv},
      primaryClass={stat.ML},
      url={https://arxiv.org/abs/2502.06331}, 
}

@misc{caprio2025conformalizedcredalregionsclassification,
      title={Conformalized Credal Regions for Classification with Ambiguous Ground Truth}, 
      author={Michele Caprio and David Stutz and Shuo Li and Arnaud Doucet},
      year={2025},
      eprint={2411.04852},
      archivePrefix={arXiv},
      primaryClass={stat.ML},
      url={https://arxiv.org/abs/2411.04852}, 
}

@inproceedings{Alireza2024,
 author = {Javanmardi, Alireza and Stutz, David and H\"{u}llermeier, Eyke},
 booktitle = {Advances in Neural Information Processing Systems},
 doi = {10.52202/079017-3714},
 editor = {A. Globerson and L. Mackey and D. Belgrave and A. Fan and U. Paquet and J. Tomczak and C. Zhang},
 pages = {116987--117014},
 publisher = {Curran Associates, Inc.},
 title = {Conformalized Credal Set Predictors},
 url = {https://proceedings.neurips.cc/paper_files/paper/2024/file/d42a8bf2f40555d4a5120300f98c88f6-Paper-Conference.pdf},
 volume = {37},
 year = {2024}
}

@misc{Chau2026MMICP,
      title={Quantifying Epistemic Predictive Uncertainty in Conformal Prediction}, 
      author={Siu Lun Chau and Soroush H. Zargarbashi and Yusuf Sale and Michele Caprio},
      year={2026},
      eprint={2602.01667},
      archivePrefix={arXiv},
      primaryClass={cs.LG},
      url={https://arxiv.org/abs/2602.01667}, 
}

\newpage

\onecolumn

\textbf{\huge \center{Appendix}}

\appendix

\section{\ourmethod{} algorithms}
\label{sec:alg}
This section presents the algorithmic framework for \ourmethod{}. Algorithm \ref{alg:credo} details the construction and conformalization of the credal envelope, while Algorithm \ref{alg:adapt_gamma} specifies the adaptive $\gamma(x)$ scheme. Finally, Algorithm \ref{alg:unc_decomp} outlines the decomposition of the prediction intervals into aleatoric and epistemic components.
\begin{algorithm}[H]
\caption{\ourmethod{}: Conformalized credal regression intervals via endpoint-trimmed posterior envelopes}
\label{alg:credo}
\begin{algorithmic}[1]
\Require Exchangeable data $D=\{(X_i,Y_i)\}_{i=1}^n$; split sizes with $|D_{\mathrm{cal}}|=m$; query observation $x$.
\Require Levels $\alpha_0\in(0,1)$ (credal nominal) and $\alpha\in(0,1)$ (conformal).
\Require Conditional model $\{F_\theta(\cdot\mid x):\theta\in\Theta\}$, prior $\pi(\theta)$, posterior sampler; MC size $B$.
\Require Trimming: either constant $\gamma\in(0,1)$ or covariate-dependent $\gamma(x)$ using Algorithm \ref{alg:adapt_gamma}.
\Ensure For any query covariate $x$, output $C(x)=[\ell(x)-\hat\tau,\;u(x)+\hat\tau]$ and (optionally) the uncertainty decomposition.

\vspace{0.2em}
\State \textbf{Split:} Randomly split $D$ into $D_{\mathrm{tr}}$ and $D_{\mathrm{cal}}$ (size $m$).
\State \textbf{Posterior:} Fit on $D_{\mathrm{tr}}$ to obtain $\pi(\theta\mid D_{\mathrm{tr}})$; draw $\theta^{(1)},\ldots,\theta^{(B)}\sim \pi(\theta\mid D_{\mathrm{tr}})$.

\vspace{0.2em}
\Function{Envelope}{$x$}
    \For{$b=1$ to $B$}
        \State $q_L^{(b)}(x)\gets F_{\theta^{(b)}}^{-1}(\alpha_0/2\mid x)$;\quad
               $q_U^{(b)}(x)\gets F_{\theta^{(b)}}^{-1}(1-\alpha_0/2\mid x)$.
    \EndFor
    \If{Constant $\gamma$}
    \State $\gamma(x) \gets \gamma$
    \EndIf
    \State $C_L(x)\gets \operatorname{Quantile}_{\gamma(x)/2}\big(\{q_L^{(b)}(x)\}_{b=1}^B\big)$; $C_U(x)\gets \operatorname{Quantile}_{1-\gamma(x)/2}\big(\{q_U^{(b)}(x)\}_{b=1}^B\big)$.
    \State $\ell(x)\gets C_L(x)$; $u(x)\gets C_U(x)$ \Comment{trimmed endpoint envelope \eqref{eq:trimmed-envelope}}
    \State \Return $(\ell(x),u(x),\{q_L^{(b)}(x),q_U^{(b)}(x)\}_{b=1}^B)$.
\EndFunction

\vspace{0.2em}
\State \textbf{Conformal calibration on $D_{\mathrm{cal}}$:}
\ForAll{$(x_i,y_i)\in D_{\mathrm{cal}}$}
    \State $(\ell_i,u_i,\_)\gets \Call{Envelope}{x_i}$.
    \State $s_i\gets \max\{\ell_i-y_i,\;y_i-u_i\}$ \Comment{distance-to-envelope score \eqref{eq:score}}
\EndFor
\State $\hat\tau\gets$ the $\lceil (m+1)(1-\alpha)\rceil$-th order statistic of $\{s_i\}_{i=1}^m$.

\vspace{0.2em}
\State \textbf{Prediction for a new covariate $x$:}
\State $(\ell(x),u(x),\{q_L^{(b)}(x),q_U^{(b)}(x)\})\gets \Call{Envelope}{x}$.
\If{requires uncertainty decomposition at $x$}
\State Run Algorithm \ref{alg:unc_decomp} with input $((\ell(x),u(x)),\{q_L^{(b)}
(x),q_U^{(b)}(x)\}_{b = 1}^B)$
\State Returns $\widehat U_A(x), \widehat U_E(x)$
\EndIf
\State \textbf{Output} $C(x)=[\ell(x)-\hat\tau,\;u(x)+\hat\tau]$ and $\widehat U_A(x), \widehat U_E(x)$ if demanded \Comment{final set \eqref{eq:final-set}}
\end{algorithmic}
\end{algorithm}

\begin{algorithm}[H]
\caption{Adaptive $\gamma$}
\label{alg:adapt_gamma}
\begin{algorithmic}[1]
\vspace{0.2em}
\Require Training set $D_{tr}$ and observation $x$.
\Require Representation $\phi(\cdot)$, $k$, constants $\gamma_{\min},\gamma_{\max},\tau_\gamma,m_\gamma$, and $\varepsilon>0$.
\Ensure For any query covariate $x$, output $\gamma(x)$.
\vspace{0.2em}
\State For each training covariate $X_i$ in $D_{\mathrm{tr}}$, compute $r_k(X_i):=\|\phi(X_i)-\phi(X_{(k)}(X_i))\|$.
\State Set reference quantiles $q_{\mathrm{lo}}=\operatorname{Quantile}_{0.50}(\{r_k(X_i)\})$ and $q_{\mathrm{hi}}=\operatorname{Quantile}_{0.95}(\{r_k(X_i)\})$.
\State Find $X_{(k)}(x)$ as the $k$-th nearest neighbor of $x$ among $\{X_i:(X_i,Y_i)\in D_{\mathrm{tr}}\}$.

\vspace{0.2em}
\State $r_k(x)\gets \|\phi(x)-\phi(X_{(k)}(x))\|$.
\State $\mathrm{sc}(x)\gets \dfrac{r_k(x)-q_{\mathrm{lo}}}{q_{\mathrm{hi}}-q_{\mathrm{lo}}+\varepsilon}$. \Comment{Defining scarcity score}
\State \textbf{Output} $\gamma(x)\gets \gamma_{\max}-(\gamma_{\max}-\gamma_{\min})\,\sigma\!\left(\dfrac{\mathrm{sc}(x)-m_\gamma}{\tau_\gamma}\right)$.  \Comment{Use scarcity score to compute $\gamma$}
\end{algorithmic}
\end{algorithm}

\begin{algorithm}[H]
\caption{Uncertainty Decomposition}
\label{alg:unc_decomp}
\begin{algorithmic}[1]
\vspace{0.2em}
\Require Lower and upper envelope bounds at observation $x$, $(\ell(x), u(x))$.
\Require Lower and upper quantile vectors at observation $x$, $\{q_L^{(b)}(x),q_U^{(b)}(x)\}_{b=1}^B)$.
\Ensure For any query covariate $x$, output $(\widehat{U}_A(x), \widehat{U}_E(x))$.
\State \textbf{Optional uncertainty decomposition at $x$:}
\State $\widehat U_A(x)\gets \frac{1}{B}\sum_{b=1}^B \big(q_U^{(b)}(x)-q_L^{(b)}(x)\big)$.
\State $\widehat U_E(x)\gets (u(x)-\ell(x))-\widehat U_A(x)$;\quad conformal term $=2\hat\tau$.
\State (So $|C(x)|=\widehat U_A(x)+\widehat U_E(x)+2\hat\tau$).
\State \textbf{Output} $\widehat{U}_A(x), \widehat{U}_E(x)$.
\end{algorithmic}
\end{algorithm}

\section{Additional results and experiments details}
In this section, we provide a comprehensive overview of our evaluation metrics, baseline architectures, and extended comparative results. We also detail the outlier detection methodology employed and present uncertainty decomposition analyses for both simulated and real-world datasets. Table~\ref{tab:realdata_new} summarizes the regression benchmarks used throughout our study; these datasets are standard in the conformal prediction literature \citep{Romano2019, Rossellini2024} and offer a diverse range of sample sizes and dimensionalities. Together, they provide a robust testbed for evaluating the performance, adaptivity, and interpretability of the proposed approach.
\label{appendix:additional_res}
\begin{table}[ht]
\caption{Benchmark Datasets Overview. We summarize the sample size ($n$), feature count ($p$), and the original scientific domain of each benchmark.}
\label{tab:realdata_new}
\centering
\small
\begin{tabular}{@{}lrrrl@{}}
\toprule
\textbf{Dataset} & \textbf{\textit{n}} & \textbf{\textit{p}} & \textbf{Scientific Domain} & \textbf{Data Source} \\ \midrule
Concrete & 1,030 & 8 & Civil Engineering & \href{https://archive.ics.uci.edu/dataset/165/concrete+compressive+strength}{UCI Repository} \\
Airfoil & 1,503 & 5 & Aeroacoustics & \href{https://archive.ics.uci.edu/dataset/291/airfoil+self+noise}{UCI Repository} \\
Wine White & 1,599 & 11 & Enology & \href{https://archive.ics.uci.edu/dataset/186/wine+quality}{UCI Repository} \\
Star & 2,161 & 48 & Pedagogy & \href{https://dataverse.harvard.edu/dataset.xhtml?persistentId=doi:10.7910/DVN/SIWH9F}{Harvard Dataverse} \\
Wine Red & 4,898 & 11 & Enology & \href{https://archive.ics.uci.edu/dataset/186/wine+quality}{UCI Repository} \\
Cycle & 9,568 & 4 & Thermodynamics & \href{http://archive.ics.uci.edu/dataset/294/combined+cycle+power+plant}{UCI Repository} \\
Electric & 10,000 & 12 & Power Systems & \href{http://archive.ics.uci.edu/ml/datasets/Electrical+Grid+Stability+Simulated+Data+}{UCI Repository} \\
Meps19 & 15,781 & 141 & Health Economics & \href{https://meps.ahrq.gov/mepsweb/data_stats/download_data_files_detail.jsp?cboPufNumber=HC-181}{AHRQ (MEPS)} \\
Superconductivity & 21,263 & 81 & Condensed Matter Physics & \href{http://archive.ics.uci.edu/ml/datasets/Superconductivty+Data}{UCI Repository} \\
Homes & 21,613 & 17 & Econometrics & \href{https://www.kaggle.com/datasets/harlfoxem/housesalesprediction}{Kaggle} \\
Protein & 45,730 & 8 & Structural Bioinformatics & \href{http://archive.ics.uci.edu/dataset/265/physicochemical+properties+of+protein+tertiary+structure}{UCI Repository} \\
Wave Energy (WEC) & 54,000 & 49 & Renewable Energy & \href{https://archive.ics.uci.edu/dataset/882/large-scale+wave+energy+farm}{UCI Repository} \\ \bottomrule
\end{tabular}
\end{table}

\subsection{Outlier detection method}
\label{appendix:outlier_detection}

To assess how prediction sets adapt to data scarcity, we categorize test set observations using a model-agnostic density approach. We first project the feature space into two dimensions using t-SNE \citep{van2008visualizing} and then apply the Local Outlier Factor (LOF) method \citep{breunig2000lof}. Using $k=15$ nearest neighbors and a $5\%$ contamination rate, we identify a set of local anomalies to serve as our outlier group. 

For a robust comparative analysis, we contrast these outliers against two specific inlier reference groups:
\begin{itemize}
    \item \textbf{Interval Ratio (ILR)}: We compare outliers against the $20\%$ most central inliers (those with the lowest LOF scores) to assess global adaptivity across all benchmarks.
    \item \textbf{Uncertainty Decomposition}: We narrow the focus to the $10\%$ most central inliers to provide a more rigorous baseline for validating the disentanglement of epistemic and aleatoric signals.
\end{itemize}


The LOF score provides a structured, model-agnostic characterization of data density, serving as an external proxy for covariate support. It is important to distinguish this evaluation procedure from the adaptive $\gamma(x)$ mechanism within \ourmethod{}. While $\gamma(x)$ is an intrinsic component used to manage model ambiguity during inference, LOF is employed here strictly as an independent diagnostic tool for categorization in the test set. This ensures that our assessment of epistemic uncertainty—specifically the expectation that outliers in sparse regions yield wider sets while inliers in dense regions yield narrower ones—is verified against a standard external baseline, rather than being an artifact of the model’s internal representations.

\subsection{Evaluation metrics details}
\label{appendix:evaluation_metrics}

Let $\hat{C}(\mathbf{X})$ denote a generic prediction set. Given a test set $\{(\mathbf{X}_i, Y_i)\}_{i=1}^m$, we evaluate the performance of all methods using three primary categories of metrics: reliability, efficiency, and epistemic awareness.

\paragraph{1. Reliability and Coverage Accuracy}
To verify that the nominal coverage $1-\alpha$ is maintained both globally and in data-sparse regions, we report:
\begin{itemize}
    \item \textbf{Average Marginal Coverage (AMC)}: The empirical proportion of test points covered by the prediction sets:
    \begin{equation*}
        \text{AMC} = \frac{1}{m} \sum_{i=1}^{m} \mathbb{I} \left( Y_i \in \hat{C}(\mathbf{X}_i) \right).
    \end{equation*}
    
    \item \textbf{Average Coverage on Outliers (ACO)}: The empirical coverage calculated exclusively over the set of detected outliers $I_{\text{out}}$ (see Section \ref{appendix:outlier_detection}):
    \begin{equation*}
        \text{ACO} = \frac{1}{|I_{\text{out}}|} \sum_{i \in I_{\text{out}}} \mathbb{I} \left( Y_i \in \hat{C}(\mathbf{X}_i) \right).
    \end{equation*}
    Maintaining ACO near the nominal level $1-\alpha$ indicates that the method remains reliable even under significant sparsity.
\end{itemize}

\paragraph{2. Conditional Coverage Efficiency}
To evaluate the quality of the intervals beyond marginal coverage, we employ proxies for conditional validity:
\begin{itemize}
    \item \textbf{Scaled Mean Interval Score (SMIS)}: Introduced by \citet{GneitingRaftery2007}, this score evaluates the trade-off between interval width and miscoverage:
    \begin{align*}
        \text{SMIS} = \frac{1}{m} \sum_{i=1}^{m} \Bigg[ &\left( \max \hat{C}(\X_i) - \min \hat{C}(\X_i)\right) \\
        &+ \frac{2}{\alpha} \cdot \left(\min \hat{C}(\X_i) - Y_i \right) \cdot \I \left\{Y_i < \min \hat{C}(\X_i) \right\} \\
        &+ \frac{2}{\alpha} \cdot \left(Y_i - \max \hat{C}(\X_i) \right)\cdot \I \left\{Y_i > \max \hat{C}(\X_i \right\} \Bigg] \; ,
    \end{align*}
    where $\min{\hat{C}(\X)}$ and $\max{\hat{C}(\X)}$ are the interval endpoints. By penalizing both excessive width and miscoverage, SMIS identifies methods that achieve high efficiency per unit of coverage, which translates into more locally-adaptive methods. Since this proper scoring rule is minimized when the interval boundaries match the true conditional quantiles, it serves as a proxy for \textit{conditional coverage} adaptivity.
\end{itemize}
    

\paragraph{3. Epistemic Awareness}
\begin{itemize}
    \item \textbf{Outlier-to-Inlier Length Ratio (ILR)} \citep{CabezasEtAlEPICSCORE}: This metric assesses how the method distinguishes between data-rich and data-sparse regions. Let $\bar{W}(I) = \frac{1}{|I|} \sum_{i \in I} (\max \hat{C}(\X_i) - \min \hat{C}(\X_i))$ be the average width over an index set $I$. The ILR is defined as:
    \begin{equation*}
        \text{ILR} = \frac{\bar{W}(I_{\text{out}})}{\bar{W}(I_{\text{in}})},
    \end{equation*}
    where $I_{\text{out}}$ contains all detected outliers (5\% contamination) and $I_{\text{in}}$ contains the $20\%$ most central inliers. A higher ILR indicates successful identification of epistemic uncertainty through selective interval widening in sparse regions.
\end{itemize}

\subsection{Baselines details}
\label{appendix:baseline_details}

\subsubsection*{Conformal Baselines and Architectures}
We evaluate \ourmethod{} against a range of state-of-the-art conformal regressors. All methods utilize CatBoost \citep{dorogush2018catboost} as the underlying quantile regression model. The baselines are categorized as follows:

\begin{itemize}

\item \textbf{CQR \& CQR-r} \citep{Romano2019, sesia2020comparison}: These represent the standard for conformal quantile regression. While CQR provides a global calibration of quantile estimates, CQR-r incorporates a randomized scaling factor to enhance local adaptivity. Both methods focus primarily on capturing aleatoric uncertainty through the base quantile regressor.

\item \textbf{UACQR-S \& UACQR-P} \citep{Rossellini2024}: These variants integrate epistemic uncertainty by constructing an ensemble of $B=1000$ base models. \textbf{UACQR-S} (Standardized) inflates the conformal score using the ensemble standard deviation, effectively widening intervals in regions of high model disagreement. \textbf{UACQR-P} (Percentile) utilizes order statistics from the ensemble to derive a more conservative, disagreement-aware conformal set.

\item \textbf{EPICScore (MDN)} \citep{CabezasEtAlEPICSCORE}: This method integrates epistemic uncertainty through score rectification, leveraging the predictive conditional distribution of the conformity scores. Instead of using the raw conformal scores, EPICScore defines a new transformed score based on the predictive posterior distribution of the original scores. We utilize the \textbf{Mixture Density Network (MDN)} variant, which models the score distribution as a mixture of Gaussians and employs MC-Dropout \citep{gal2016dropout} to capture model ambiguity and estimate the posterior predictive distribution.

\end{itemize}

\subsubsection*{Implementation Details and Hyperparameters}

The experimental configurations for the baselines are detailed below:

\begin{itemize}

\item \textbf{CatBoost Base Model}: Following standard practice and ensuring a robust model, we fix the tree depth to $6$, $L_2$ leaf regularization to $3$, and the bagging temperature to $1$. The models are trained for $1000$ iterations with an auto-tuned learning rate.
\item \textbf{EPICScore MDN}: For the EPICScore baseline, we implement a Mixture Density Network (MDN) trained specifically to model the distribution of the conformity scores generated by the CatBoost base model. The MDN architecture consists of two hidden layers with $64$ nodes each and approximates the predictive distribution of these scores as a mixture of $5$ Gaussian components. To ensure convergence while preventing overfitting, the MDN is trained for a maximum of $2000$ epochs with an early stopping patience of $50$ iterations. We utilize a learning rate of $0.001$ and a dropout rate of $0.5$ to facilitate MC-dropout-based epistemic estimation. The training employs an adaptive batch size: $40$ for smaller samples, $130$ for larger sets such as \textit{Superconductivity}, and $250$ for the \textit{WEC} benchmark.

\item \textbf{Ensemble Settings}: The UACQR variants utilize the same ensemble size of catboost, $B=1000$, ensuring stable and robust estimation of the epistemic signals.
\end{itemize}

\subsection{Additional comparisons}
\label{appendix:additional_comparisons}
The additional marginal coverage and coverage on test set outliers are respectively displayed on Tables \ref{tab:marginal_coverage} and \ref{tab:coverage_outlier}. In general, all approaches respect the conformal marginal validity for every dataset in Table \ref{tab:marginal_coverage}, while \ourmethod{} stands out in the outlier coverage, achieving the closest coverage in the majority of scenarios, confirming that the widening of intervals in outliers also corresponds to an improvement in coverage in these areas.
\begin{table}[h]
\centering
\caption{\textbf{Marginal coverage on test data (target $1-\alpha$).}
Entries report the mean empirical coverage over 30 runs; parentheses show $2\times$ the standard deviation.
All methods achieve coverage close to the nominal level, with small deviations due to finite-sample variability and random splitting.}
\label{tab:marginal_coverage}
\begin{adjustbox}{max width=\textwidth}
\begin{tabular}{cccccccc}
\hline
\textbf{Dataset (n)}  & \textbf{CREDO} & \textbf{CREDO (adap.)} & \textbf{CQR}  & \textbf{CQR-r} & \textbf{UACQRS} & \textbf{UACQRP} & \textbf{EPIC} \\ \hline
concrete (1030)          & 0.91 (0.01) & 0.90 (0.01) & 0.90 (0.01) & 0.89 (0.01) & 0.90 (0.01) & 0.90 (0.01) & 0.90 (0.02) \\
airfoil (1503)           & 0.90 (0.01) & 0.91 (0.01) & 0.91 (0.01) & 0.89 (0.01) & 0.90 (0.01) & 0.90 (0.01) & 0.91 (0.01) \\
winewhite (1599)         & 0.90 (0.00) & 0.90 (0.01) & 0.91 (0.00) & 0.90 (0.00) & 0.90 (0.00) & 0.90 (0.00) & 0.90 (0.01) \\
star (2161)              & 0.90 (0.01) & 0.90 (0.01) & 0.90 (0.01) & 0.90 (0.01) & 0.90 (0.01) & 0.90 (0.01) & 0.91 (0.01) \\
winered (4898)           & 0.90 (0.01) & 0.90 (0.01) & 0.90 (0.01) & 0.90 (0.01) & 0.90 (0.01) & 0.90 (0.01) & 0.90 (0.01) \\
cycle (9568)             & 0.90 (0.00) & 0.90 (0.00) & 0.90 (0.00) & 0.90 (0.00) & 0.90 (0.00) & 0.90 (0.00) & 0.90 (0.01) \\
electric (10000)         & 0.90 (0.00) & 0.90 (0.00) & 0.90 (0.00) & 0.90 (0.00) & 0.90 (0.00) & 0.90 (0.00) & 0.90 (0.00) \\
meps19 (15781)           & 0.90 (0.00) & 0.90 (0.00) & 0.90 (0.00) & 0.90 (0.00) & 0.90 (0.00) & 0.92 (0.01) & 0.90 (0.00) \\
superconductivity (21263) & 0.90 (0.00) & 0.90 (0.00) & 0.90 (0.00) & 0.90 (0.00) & 0.90 (0.00) & 0.90 (0.00) & 0.90 (0.00) \\
protein (45730)          & 0.90 (0.00) & 0.90 (0.00) & 0.90 (0.00) & 0.90 (0.00) & 0.90 (0.00) & 0.90 (0.00) & 0.90 (0.00) \\
homes (21613)            & 0.90 (0.00) & 0.90 (0.00) & 0.90 (0.00) & 0.90 (0.00) & 0.90 (0.00) & 0.90 (0.00) & 0.90 (0.00) \\
WEC (54000)              & 0.90 (0.00) & 0.90 (0.00) & 0.90 (0.00) & 0.90 (0.00) & 0.90 (0.00) & 0.90 (0.00) & 0.90 (0.00) \\ \hline
\end{tabular}
\end{adjustbox}
\end{table}

\begin{table}[h]
\centering
\caption{\textbf{Coverage on test-set outliers (target $1-\alpha=0.90$).}
Outliers are defined using the criterion as in Appendix~\ref{appendix:outlier_detection}.
Entries report the mean outlier coverage over 30 runs; parentheses show $2\times$ the standard deviation.
Bold marks methods whose outlier coverage is closest to 0.90 within a 95\% confidence interval. Both \ourmethod{} versions have the closest coverage to the nominal level in 9 out of 12 datasets, showcasing its adaptability to outlier regions and its adequacy to deal with epistemic uncertainty.}
\label{tab:coverage_outlier}
\begin{adjustbox}{max width=\textwidth}
\begin{tabular}{ccccccccc}
\hline
\textbf{Dataset (n)}  & \textbf{CREDO}     & \textbf{CREDO (adap.)} & \textbf{CQR}           & \textbf{CQR-r}         & \textbf{UACQRS}        & \textbf{UACQRP}        & \textbf{EPIC}          \\ \hline
concrete (1030)          & \textbf{0.92 (0.03)} & \textbf{0.92 (0.03)} & \textbf{0.87 (0.04)} & 0.85 (0.05) & \textbf{0.88 (0.05)} & 0.86 (0.04) & \textbf{0.87 (0.05)} \\
airfoil (1503)           & \textbf{0.90 (0.04)} & \textbf{0.89 (0.03)} & \textbf{0.87 (0.04)} & \textbf{0.88 (0.05)} & \textbf{0.91 (0.04)} & \textbf{0.89 (0.04)} & \textbf{0.91 (0.03)} \\
winewhite (1599)         & \textbf{0.90 (0.01)} & \textbf{0.89 (0.02)} & 0.86 (0.02) & 0.85 (0.02) & \textbf{0.89 (0.02)} & 0.88 (0.01) & 0.88 (0.02) \\
star (2161)              & \textbf{0.88 (0.02)} & 0.87 (0.02) & 0.79 (0.04) & 0.74 (0.04) & 0.85 (0.03) & 0.85 (0.02) & 0.80 (0.03) \\
winered (4898)           & \textbf{0.91 (0.03)} & \textbf{0.90 (0.03)} & 0.82 (0.04) & 0.80 (0.04) & 0.84 (0.05) & 0.84 (0.05) & 0.83 (0.04) \\
cycle (9568)             & \textbf{0.89 (0.02)} & \textbf{0.91 (0.01)} & 0.85 (0.01) & 0.84 (0.01) & 0.86 (0.02) & 0.88 (0.01) & 0.86 (0.01) \\
electric (10000)         & 0.88 (0.01) & \textbf{0.89 (0.01)} & \textbf{0.91 (0.01)} & \textbf{0.90 (0.01)} & \textbf{0.90 (0.01)} & \textbf{0.90 (0.01)} & \textbf{0.91 (0.01)} \\
meps19 (15781)           & \textbf{0.90 (0.01)} & \textbf{0.90 (0.01)} & \textbf{0.89 (0.01)} & \textbf{0.89 (0.01)} & \textbf{0.90 (0.01)} & 0.92 (0.02) & \textbf{0.89 (0.01)} \\
superconductivity (21263) & 0.92 (0.01) & \textbf{0.91 (0.01)} & 0.87 (0.01) & 0.89 (0.01) & 0.87 (0.01) & 0.88 (0.01) & 0.88 (0.01) \\
homes (21613)            & 0.92 (0.01) & 0.92 (0.01) & \textbf{0.89 (0.01)} & \textbf{0.90 (0.01)} & 0.89 (0.01) & \textbf{0.90 (0.01)} & \textbf{0.90 (0.01)} \\
protein (45730)          & \textbf{0.90 (0.01)} & \textbf{0.90 (0.01)} & 0.89 (0.01) & 0.89 (0.01) & 0.89 (0.01) & 0.89 (0.01) & 0.89 (0.01) \\
WEC (54000)              & \textbf{0.86 (0.01)} & 0.84 (0.01) & 0.85 (0.01) & 0.85 (0.01) & 0.85 (0.01) & 0.85 (0.01) & 0.84 (0.01) \\ \hline
\end{tabular}
\end{adjustbox}
\end{table}

\subsection{Uncertainty decomposition in real data}
\label{appendix:decomp_real_data}
For the uncertainty decomposition experiments, we employ the adaptive \ourmethod{} version detailed in Section \ref{appendix:adaptive_gamma} and categorize test observations using the density-based procedure described in Section \ref{appendix:outlier_detection}. We evaluate the effectiveness of this decomposition by comparing the percentage of epistemic uncertainty—averaged by observations across 30 experimental repetitions—between the identified outliers and the top 10\% of inliers.

Figure \ref{fig:credo_boxplots_epistemic} presents the resulting boxplots for all 12 regression benchmarks. We observe that outliers consistently exhibit higher normalized epistemic uncertainty than inliers in smaller datasets, such as \textit{concrete, winewhite, winered,} and \textit{cycle}. This trend aligns with the expectation that smaller sample sizes are more susceptible to regional sparsity, leading the credal envelope to correctly assign higher model ambiguity to isolated points. Conversely, in several high-dimensional or larger datasets like \textit{homes, proteins,} and \textit{WEC}, the differences between groups remain notable, albeit sometimes more subtle. This suggests that even with larger global sample sizes, the increased dimensionality preserves local regions of scarcity where epistemic uncertainty remains a dominant factor.

\begin{figure}[h]
    \centering
    \includegraphics[width=\linewidth]{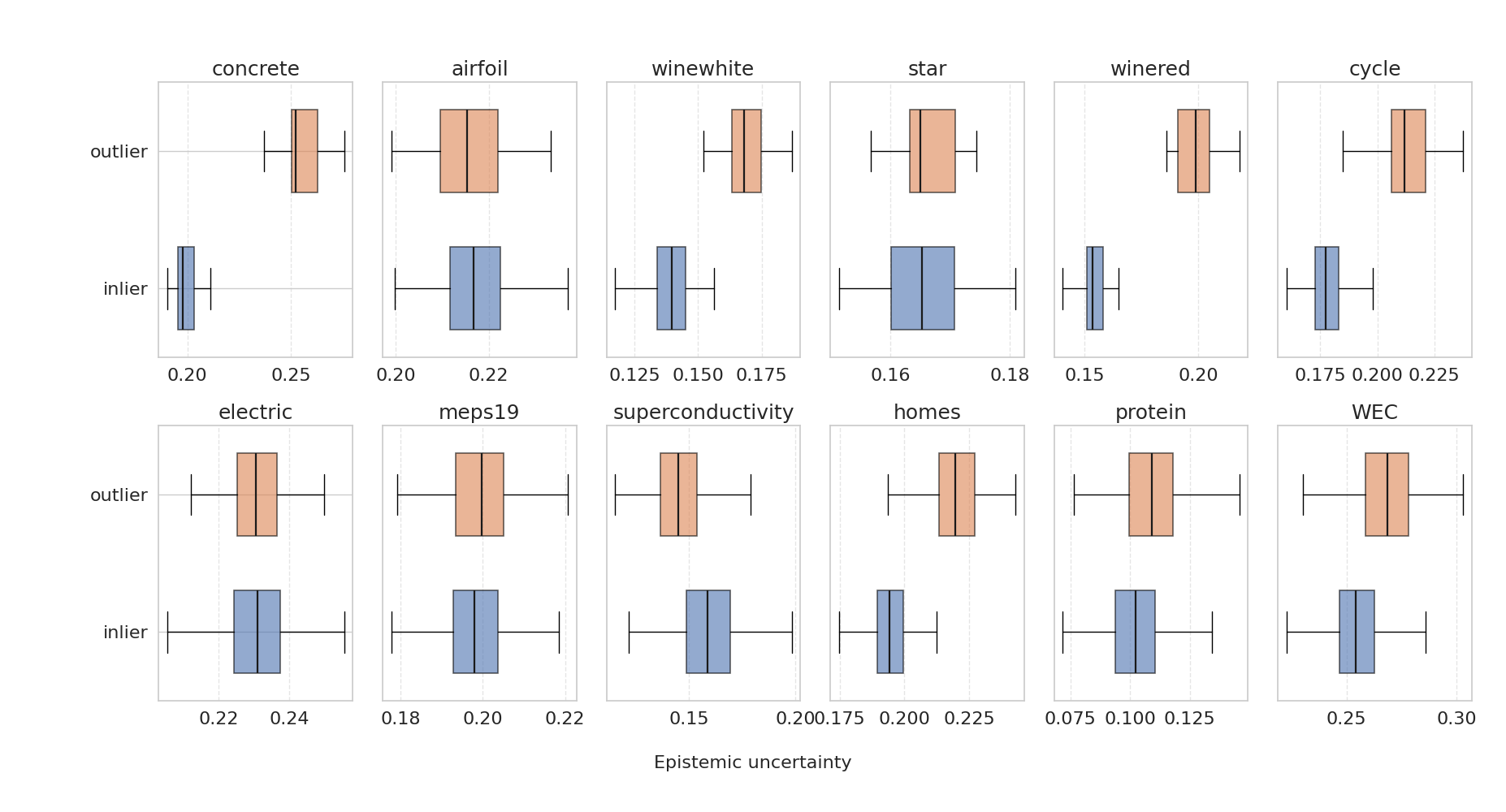}
    \caption{Percentage of Epistemic Uncertainty for outliers and inliers across real-world datasets using the QNN-based implementation of \ourmethod{}. The uncertainty decomposition successfully attributes a higher proportion of epistemic uncertainty to outliers compared to inliers, a distinction that is particularly pronounced in small-data domains (top row).}
    \label{fig:credo_boxplots_epistemic}
\end{figure}

In instances where the disparity between inliers and outliers is negligible, the result can often be attributed to model specification constraints that induce uniform epistemic behavior regardless of local data density. Such cases provide a valuable diagnostic for the base model’s representational limits: if the underlying architecture cannot distinguish between high-support and low-support regions, the credal envelope reflects this lack of discriminative power. Ultimately, these results validate the ability of \ourmethod{} to treat epistemic uncertainty as a spatially-aware signal, effectively identifying when prediction intervals are widened due to a genuine lack of information rather than inherent statistical noise.

\subsection{Simulated scenarios and Uncertainty decompositions}
\label{appendix:decomp_sim_data}
The simulated scenarios used in this work are specifically engineered to highlight the behavior of the credal envelope in regions of high data scarcity while simultaneously testing its stability in densely populated, heteroscedastic regions. We detail below the data-generating processes for the three scenarios used to evaluate the interpretability of \ourmethod{}:
\begin{itemize}
    \item \textbf{Scenario 1} (Epistemic Gap): Features a piecewise non-linear mean function where the left and right domains are well-sampled, but the middle region ($x \in [-0.2, 0.2]$) contains very few points (1\% of the samples). The noise is kept constant and small ($\sigma=0.1$) to ensure that any interval widening in the gap is strictly attributed to epistemic uncertainty.
    \item \textbf{Scenario 2} (Heteroscedastic Scarcity): Defines a scarce region in the interval $[0, 0.4]$ containing only $2\%$ of the total samples. The noise profile is defined by a complex heteroscedastic function $\sigma(x)$ that oscillates across four distinct regions: low variance ($\sigma=0.1$) for $x \le -0.3$, moderate oscillating variance ($\sigma(x) = 0.2 + 0.15 |\sin(10x)|$) for $x \in (-0.3, 0)$, high oscillating variance ($\sigma(x) = 0.7 + 0.3 |\sin(12x)|$) in the scarce region, and back to moderate levels for $x > 0.4$.
    \item \textbf{Scenario 3} (Mixture Gaps): Uses a mixture of Gaussian clusters and uniform bands to create several disjoint dense and sparse regions: a dense left cluster ($\mu=-1.8, \sigma=0.35$), a dense right cluster ($\mu=1.6, \sigma=0.45$), a sparse middle band ($x \in [-0.4, 0.4]$), and extreme tail points for extrapolation pressure. The noise is sampled from a conditional mixture where a "shock" component (shifted, high-variance) occurs with a probability $p_{shock}(x)$ that increases significantly in the far tails and right regions, governed by a logistic sigmoid $p_{shock}(x) = 0.05 + 0.35/(1+e^{-(x-1.3)}) + 0.15 \cdot \mathbb{I}(|x| > 3.2)$.
\end{itemize}
In the main text, Scenario 1 is utilized in Figure \ref{fig:credo_vs_cqr} to evaluate the capture of epistemic uncertainty and contrast the geometric behavior of \ourmethod{} against standard CQR. Scenario 2 is featured in Figure \ref{fig:gamma} to demonstrate the efficiency of the adaptive $\gamma(x)$ scheme, showcasing its ability to produce more informative intervals in well-supported regions compared to a fixed hyperparameter.

To further validate the interpretability of our framework, we provide additional displays of the uncertainty decomposition for Scenarios 2 and 3 in Figure \ref{fig:uncertainty_comparison}. These results demonstrate that \ourmethod{} successfully isolates the aleatoric core—even when that core is highly heteroscedastic or multi-modal—from the epistemic inflation triggered by the scarcity score. This confirms that the framework can distinguish between "risk" (known aleatoric noise) and "ambiguity" (lack of local information) in a consistent way.

\begin{figure}[h]
     \centering
     \begin{subfigure}[h]{0.48\linewidth}
         \centering
         \includegraphics[width=\linewidth]{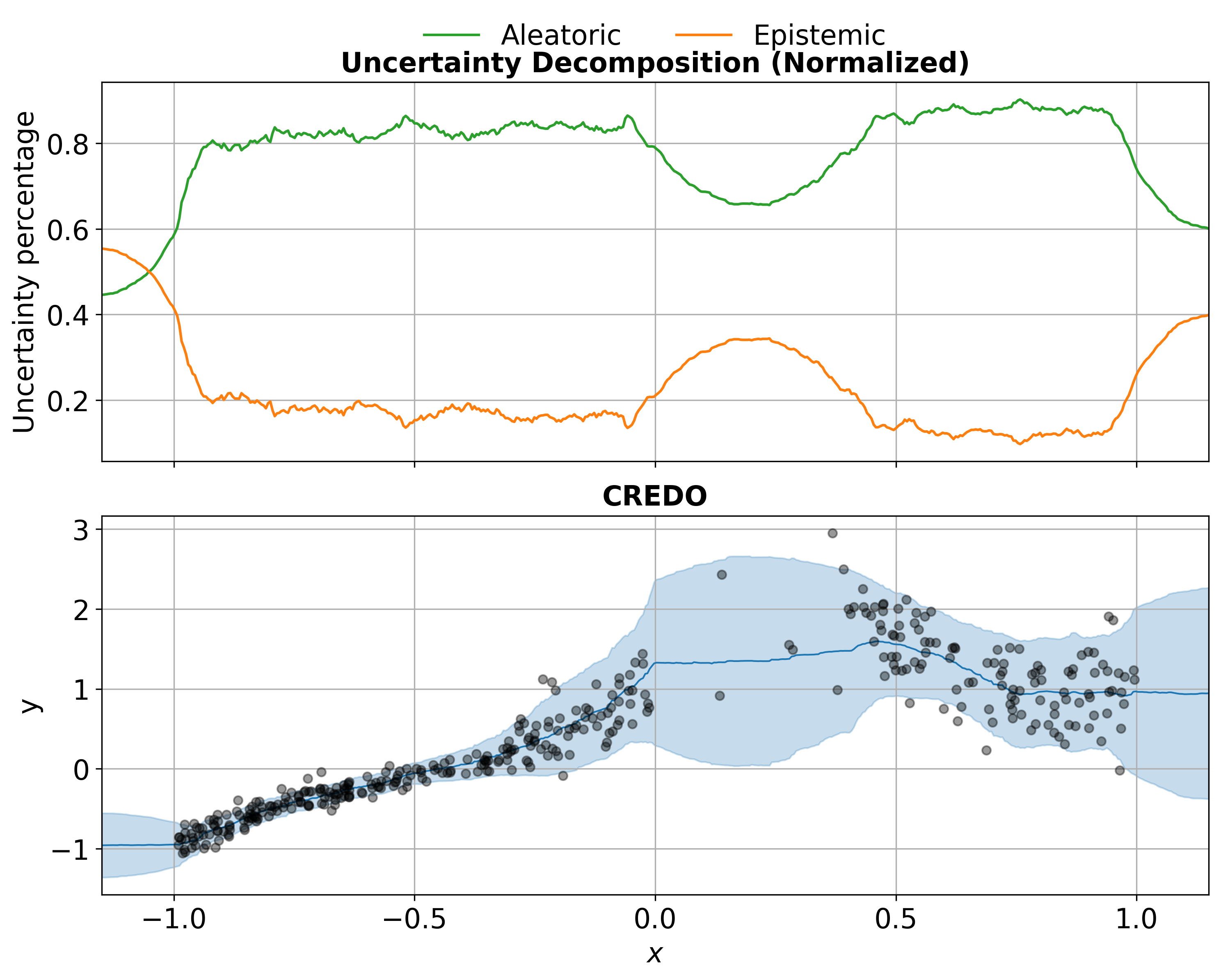}
         \caption{Decomposition in scenario 2}
         \label{fig:credo_v1_decomp}
     \end{subfigure}
     \hfill 
     \begin{subfigure}[h]{0.48\linewidth}
         \centering
         \includegraphics[width=\linewidth]{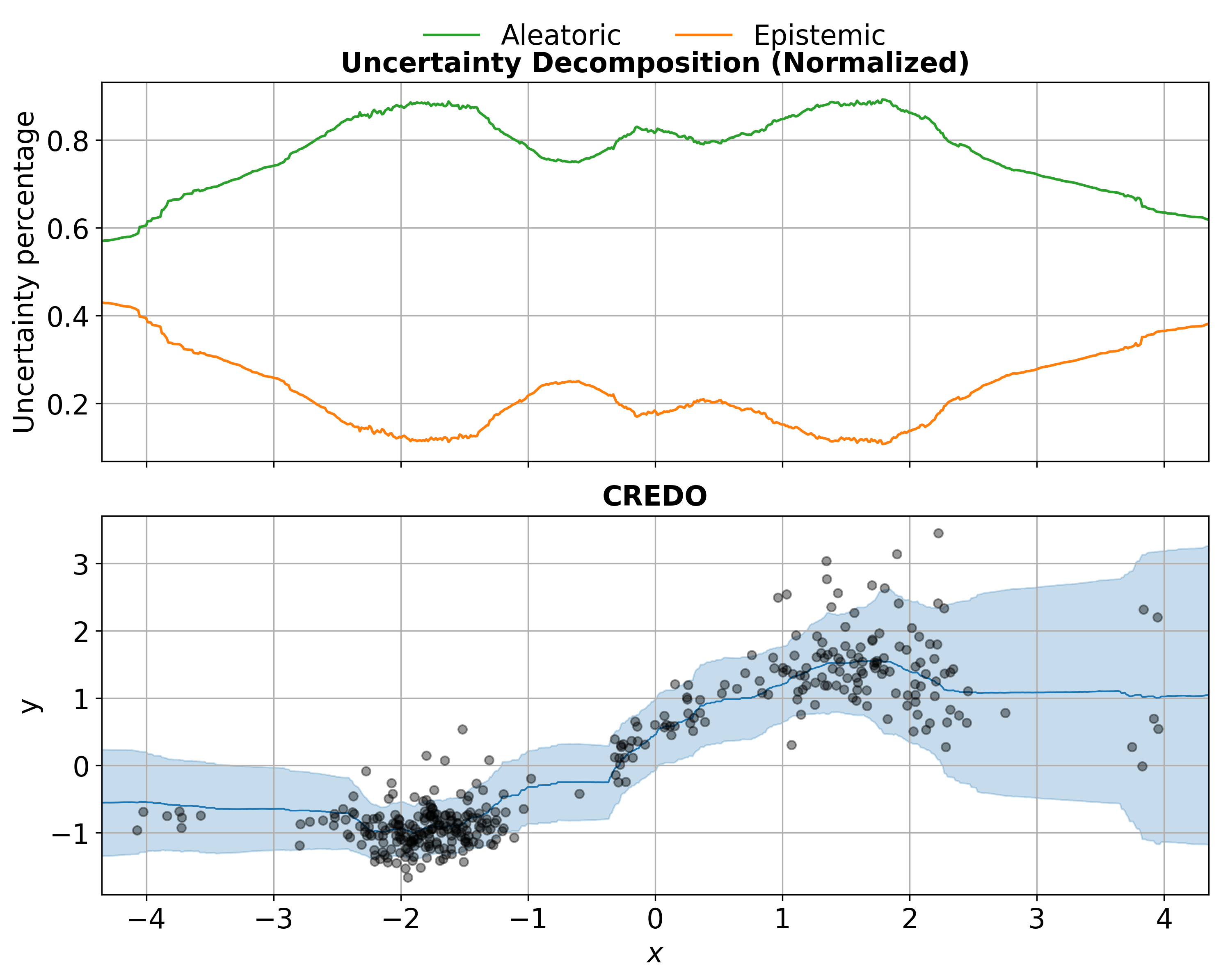}
         \caption{Decomposition in scenario 3}
         \label{fig:credo_v2_decomp}
     \end{subfigure}
     
     \caption{Normalized Uncertainty Decomposition for Scenarios 2 and 3. This figure illustrates the ability of \ourmethod{} to disentangle aleatoric and epistemic components under complex data-generating processes. In Scenario 2 (left), the decomposition successfully isolates the high epistemic inflation in the scarce region ($x \in [0, 0.4]$) from the underlying oscillating heteroscedastic noise. Similarly, in Scenario 3 (right), the epistemic percentage peaks in the gaps between clusters and in the extrapolative tails, despite the presence of multi-modal "shock" noise components.}
     \label{fig:uncertainty_comparison}
\end{figure}

\section{\ourmethod{} computational details}
\label{appendix:credo_details}
In this section, we give details on the architecture and the approach to derive the credal envelopes by using a Quantile Neural Network with dropout, as well as the standard parameter choices and heuristics for the adaptive gamma scheme. We also shed light on an alternative approach to building the credal envelopes by using BART, which was employed in some illustrations and toy examples of this work.

\subsection{QNN architecture and envelope}\label{appendix:qnn_architecture}For the deep learning-based implementation of the credal envelope, we utilize a Quantile Neural Network (QNN) designed to produce the initial quantile estimates while capturing model uncertainty via dropout.

\begin{itemize}\item \textbf{Network Architecture}: The model consists of three fully connected hidden layers with 128, 128, and 64 nodes, respectively. Each hidden layer is integrated with a batch normalization layer to stabilize training and a dropout layer (rate of 0.3) to facilitate epistemic uncertainty estimation. These dropout layers remain active during inference to allow for stochastic sampling of the network weights.

\item \textbf{Training and Optimization}: The network is trained for a maximum of 2000 epochs with an early stopping patience of 50 iterations. We apply a weight decay of $1 \times 10^{-6}$ and utilize a learning rate of $0.001$ with a batch size depending on the dataset scale, fixing 32 for small samples (less than 10000), 120 for larger datasets (above 10000), and 250 specifically for the \textit{WEC} benchmark.

\item \textbf{Envelope Construction}: To construct the credal envelope, we utilize Monte Carlo Dropout to generate $N_{MC}=1000$ samples from the posterior distribution of the quantiles. By maintaining stochastic dropout at test time, we obtain a distribution of quantile estimates that reflects the inherent model ambiguity. For a given test point $x$, the credal envelope $[\underline{C}(x), \overline{C}(x)]$ is derived by computing the $\gamma(x)/2$ and $1-\gamma(x)/2$ quantiles of these MC samples, respectively. This procedure ensures that the envelope boundaries are directly informed by the epistemic uncertainty captured within the model's posterior, allowing the prediction intervals to adapt to regions of high ambiguity.
\end{itemize}

\subsection{Adaptive Gamma standard parameter choices}
\label{appendix:adaptive_gamma}
The width of the credal envelope in \ourmethod{} is determined by the parameter $\gamma$, which dictates which quantiles are extracted from the model's posterior distribution to represent model ambiguity. While $\gamma$ can be a fixed global hyperparameter, our framework features an adaptive scheme that scales $\gamma(x)$ based on the local geometry of the feature space. This configuration utilizes the scarcity score $sc(x)$ defined in Eq. \eqref{eq:scarcity_score}. 

To maintain statistical consistency across varying geometries, we employ a heuristic for the number of neighbors $k$ that accounts for the training sample size $n$ and input dimensionality $d$ \citep{zhao2022analysis}:
$$ k = \lceil C_{base} \cdot n^{\frac{4}{4+d}} \rceil $$
where $C_{base} = 6.672$. This formulation ensures that $k$ scales at an optimal rate to balance bias and variance, preventing the scarcity metric from collapsing in high-dimensional spaces. The scarcity score is mapped to $\gamma(x)$ to dictate the "credalness" of the prediction, constrained by the defaults $\gamma_{min} = 0.1$ and $\gamma_{max} = 0.75$.

These thresholds are specifically chosen to balance the tradeoff between informativeness and the potential saturation of the envelopes in practice. Specifically, $\gamma_{max}$ establishes the baseline for epistemic uncertainty in high-density regions, ensuring the envelope maintains a minimum informative width and does not collapse into a single point. Conversely, $\gamma_{min}$ acts as a lower bound for the quantile levels in sparse regions; by reaching this limit, the envelope expands to capture the extreme tails of the posterior without becoming non-informative or excessively wide. 

Additionally, by default, we set the scarcity threshold $m_{\gamma} = 0$ and the scaling parameter $\tau_{\gamma} = 1$. To ensure the robustness of the mapping, we incorporate a small numerical constant $\epsilon = 1 \times 10^{-6}$ within the scarcity calculations, providing stability in highly clustered regions of the feature space. Furthermore, the representation $\phi(x)$ is defined as the standardized features to maintain a consistent scale across different datasets.

\subsection{BART-Based Credal Envelope}
\label{appendix:bart_envelope}

As a non-parametric alternative to neural architectures, we leverage Bayesian Additive Regression Trees (BART) to derive the credal envelope. While BART provides a more formal Bayesian posterior compared to dropout-based approximations, it was primarily utilized in our toy examples and illustrations. Due to the high computational cost of MCMC sampling, the Quantile Neural Network (QNN) architecture was preferred for the real-world benchmarks, as it offered better scalability without sacrificing predictive performance.

\begin{itemize}

\item \textbf{Model Specification}: We utilize a heteroscedastic formulation of BART, where both the conditional mean and the residual variance are modeled as functions of the covariates. Specifically, the model assumes $Y = f(x) + \epsilon(x)$, where $f(x) = \sum_{j=1}^m T_j(x; M_j)$ is a sum of regression trees capturing the mean, and the noise term $\epsilon(x) \sim \mathcal{N}(0, \sigma^2(x))$ follows a variance process also governed by a secondary tree ensemble. This allows the model to capture local variations in aleatoric noise while the Bayesian prior over the tree structures and leaf parameters accounts for epistemic uncertainty.

\item \textbf{MCMC Sampling}: We employ Markov Chain Monte Carlo (MCMC) to generate samples from the posterior distribution of the regression function. These chains provide a rich, non-parametric representation of model ambiguity, as each iteration in the chain reflects a valid hypothesis for the underlying data-generating process given the evidence.

\item \textbf{Envelope Construction}: For a test point $x$, we construct the credal envelope by leveraging the full posterior distribution captured through MCMC sampling. Specifically, we utilize the tree-based model structure to generate a collection of quantile estimates from each iteration of the posterior chain. The final boundaries $[\underline{C}(x), \overline{C}(x)]$ are then defined by extracting the empirical $\gamma(x)/2$ and $1-\gamma(x)/2$ quantiles across this distribution of quantile estimates. This hierarchical approach ensures the envelope reflects the model's epistemic uncertainty by directly quantifying the variance among the different candidate models stored in the MCMC chains.
\end{itemize}

\section{Proofs}

\begin{lemma}[Quantile sandwich]
\label{lem:quantile-sandwich}
Let $Y\sim F$ where $F$ is a CDF, and let $q=F^{-1}(p)$ for $p\in(0,1)$.
Then
\[
\mathbb{P}(Y<q)\le p
\qquad\text{and}\qquad
\mathbb{P}(Y\le q)\ge p.
\]
Consequently, for any $0<p<1/2$,
\[
\mathbb{P}\bigl(F^{-1}(p)\le Y \le F^{-1}(1-p)\bigr)\ge 1-2p.
\]
\end{lemma}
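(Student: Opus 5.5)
We argue directly from the generalized inverse $F^{-1}(p)=\inf\{t:\,F(t)\ge p\}$, as in the Remark following \eqref{eq:quantile-envelope}, using only that $F$ is nondecreasing and right-continuous with limits $0$ and $1$ at $\mp\infty$. Since $p\in(0,1)$, the set $\{t: F(t)\ge p\}$ is nonempty and bounded below, so $q=F^{-1}(p)$ is a finite real number.

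\emph{Second inequality first.} Choose $t_k\downarrow q$ with $F(t_k)\ge p$; this is possible by the definition of the infimum and monotonicity of $F$. Right-continuity then gives $F(q)=\lim_k F(t_k)\ge p$, that is, $\mathbb{P}(Y\le q)\ge p$. \emph{First inequality.} For every $t<q$, $t$ is not in the set, so $F(t)<p$. Hence
\[
\mathbb{P}(Y<q)=\lim_{t\uparrow q}F(t)\le p
\]
by continuity from below of the probability measure along the events $\{Y\le t\}\uparrow\{Y<q\}$.

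\emph{Consequence.} Fix $0<p<1/2$ and set $a=F^{-1}(p)$, $b=F^{-1}(1-p)$. First, $a\le b$, because $p<1-p$ and $F^{-1}$ is nondecreasing; we need this only so that the interval is nonempty, and the union bound below does not depend on it. Then
\[
\mathbb{P}(a\le Y\le b)\ \ge\ 1-\mathbb{P}(Y<a)-\mathbb{P}(Y>b).
\]
The first part gives $\mathbb{P}(Y<a)\le p$. The second part applied at level $1-p$ gives $\mathbb{P}(Y>b)=1-\mathbb{P}(Y\le b)\le 1-(1-p)=p$. Together these yield $\mathbb{P}(a\le Y\le b)\ge 1-2p$.

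The only delicate point is the right-continuity step establishing $F(q)\ge p$, in particular that the infimum is attained; this is where atoms of $F$ matter. Everything else is a routine complement and union bound.
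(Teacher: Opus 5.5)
Your proof is correct and follows the paper's argument: the definition of the generalized inverse gives $F(t)<p$ for $t<q$ and $F(q)\ge p$, then continuity from below and a union bound (using $\mathbb{P}(Y>F^{-1}(1-p))=1-F(F^{-1}(1-p))\le p$) finish it. You additionally justify two points the paper's proof takes for granted: that $q$ is finite, and that right-continuity gives $F(q)\ge p$.
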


\begin{proof}
By definition of $q=F^{-1}(p)$, we have $F(t)<p$ for all $t<q$ and $F(q)\ge p$.
Thus $\mathbb{P}(Y<q)=\sup_{t<q}\mathbb{P}(Y\le t)=\sup_{t<q}F(t)\le p$.
Also $\mathbb{P}(Y\le q)=F(q)\ge p$. The last display follows by a union bound:
\[
\mathbb{P}\bigl(Y< F^{-1}(p)\bigr)\le p,
\qquad
\mathbb{P}\bigl(Y>F^{-1}(1-p)\bigr)=1-\mathbb{P}\bigl(Y\le F^{-1}(1-p)\bigr)\le p.
\]
\end{proof}

\begin{proof}[Proof of Theorem \ref{thm:credal-coverage-envelope}]
Fix $x$ and $F\in\mathcal F_0(x)$, and define the $F$-central endpoints
\[
q_L(F,x):=F^{-1}(\alpha_0/2\mid x),
\qquad
q_U(F,x):=F^{-1}(1-\alpha_0/2\mid x).
\]
By definition of $\ell(x)$ as an infimum over $F$-quantiles,
$\ell(x)\le q_L(F,x)$. By definition of $u(x)$ as a supremum over $F$-quantiles,
$u(x)\ge q_U(F,x)$. Hence
\[
[\ell(x),u(x)] \supseteq [q_L(F,x),q_U(F,x)].
\]
Applying Lemma~\ref{lem:quantile-sandwich} conditionally on $X=x$ gives
\[
\mathbb{P}_F\!\left(q_L(F,x)\le Y\le q_U(F,x)\mid X=x\right)\ge 1-\alpha_0.
\]
Because $[\ell(x),u(x)]$ contains $[q_L(F,x),q_U(F,x)]$, the same lower bound
holds for $[\ell(x),u(x)]$.
\end{proof}

\begin{proof}[Proof of Theorem \ref{theorem:posterior-predictive-trimmed}]
By construction of $C_L(x)$ and $C_U(x)$, we have
\[
\pi\!\left(q_L(\theta,x)\le C_L(x)\mid D_{\mathrm{tr}}\right)=\gamma/2,
\qquad
\pi\!\left(q_U(\theta,x)\ge C_U(x)\mid D_{\mathrm{tr}}\right)=\gamma/2.
\]
Hence
\[
\pi\!\left(q_L(\theta,x)\ge C_L(x)\mid D_{\mathrm{tr}}\right)\ge 1-\gamma/2,
\qquad
\pi\!\left(q_U(\theta,x)\le C_U(x)\mid D_{\mathrm{tr}}\right)\ge 1-\gamma/2.
\]
Let
$R_{qq}(x)=\{\theta:\ q_L(\theta,x)\ge C_L(x),\ q_U(\theta,x)\le C_U(x)\}$.
By the union bound,
\[
\pi\!\left(\theta\in R_{qq}(x)\mid D_{\mathrm{tr}}\right)
\ge 1-\gamma.
\]

Now fix any $\theta\in R_{qq}(x)$. Then
$[q_L(\theta,x),q_U(\theta,x)]\subseteq[C_L(x),C_U(x)]=[\ell(x),u(x)]$, and therefore
\[
\P_\theta\!(Y\in[\ell(x),u(x)]\mid X=x)
\ \ge\
\P_\theta\!(Y\in[q_L(\theta,x),q_U(\theta,x)]\mid X=x).
\]
The interval $[q_L(\theta,x),q_U(\theta,x)]$ is the central $(1-\alpha_0)$
quantile interval under $F_\theta(\cdot\mid x)$, so it satisfies
\[
\P_\theta\!(Y\in[q_L(\theta,x),q_U(\theta,x)]\mid X=x )\ge 1-\alpha_0,
\]
and thus, for all $\theta\in R_{qq}(x)$,
\[
\P_\theta (Y\in[\ell(x),u(x)]\mid X=x) \ge 1-\alpha_0.
\]

Finally, by the definition of the posterior predictive probability,
\begin{align*}
\P\!(Y\in[\ell(x),u(x)] \mid X=x, D_{\mathrm{tr}}) 
&=
\int \P_\theta\! ( Y\in[\ell(x),u(x)]\mid X=x ) \, d\pi(\theta\mid D_{\mathrm{tr}})\\
&\ge
\int_{R_{qq}(x)} \P_\theta\!( Y\in[\ell(x),u(x)]\mid X=x ) \, d\pi(\theta\mid D_{\mathrm{tr}})\\
&\ge
(1-\alpha_0)\,\pi\!\left(\theta\in R_{qq}(x)\mid D_{\mathrm{tr}}\right)\\
&\ge (1-\alpha_0)(1-\gamma),
\end{align*}
which proves the claim.
\end{proof}

\begin{proof}[Proof of Theorem \ref{thm:split-conformal}]
Follows from the standard split-conformal arguments \cite{Vovk2005,Lei2018,vovk2022algorithmic,Izbicki2025}.
    
\end{proof}

\begin{assumption}[Correct specification and posterior consistency]
\label{assump:oracle}
Let $\alpha_0=\alpha\in(0,1)$. Assume there exists a unique true parameter
$\theta^\star\in\Theta$ such that the data are generated i.i.d.\ from the joint
distribution induced by $\theta^\star$, i.e.\ $(X_i,Y_i)_{i\ge 1}$ are i.i.d.\ and
for $P_X$-almost every $x$,
\[
Y\mid X=x \sim F_{\theta^\star}(\cdot\mid x).
\]
Let $D_{\mathrm{tr}}$ and $D_{\mathrm{cal}}$ be obtained by sample splitting with
$|D_{\mathrm{tr}}|=n_{\mathrm{tr}}\to\infty$ and $|D_{\mathrm{cal}}|=m\to\infty$.
All limits below are with respect to the joint draw of the full dataset (and,
if applicable, the random split).

\smallskip
We assume:

\begin{enumerate}
\item[(A1)] \textbf{Posterior consistency for $\theta$ (in probability).}
There exists a metric $d$ on $\Theta$ such that for every $\varepsilon>0$,
\[
\pi\!\left(d(\theta,\theta^\star)>\varepsilon\mid D_{\mathrm{tr}}\right)
\ \xrightarrow{\ \mathbb P_{\theta^\star}\ }\ 0.
\]

\item[(A2)] \textbf{Continuity of the endpoint functionals at $\theta^\star$.}
For $P_X$-almost every $x$, define the endpoint maps (at nominal level $\alpha$)
\[
q_L(\theta,x)=F_\theta^{-1}(\alpha/2\mid x),
\qquad
q_U(\theta,x)=F_\theta^{-1}(1-\alpha/2\mid x).
\]
Assume $q_L(\theta,x)$ and $q_U(\theta,x)$ are continuous in $\theta$ at
$\theta^\star$ with respect to $d$.

\item[(A3)] \textbf{Regularity of the true conditional distribution at the oracle endpoints.}
For $P_X$-almost every $x$, let
\[
q_L^\star(x)=F_{\theta^\star}^{-1}(\alpha/2\mid x),
\qquad
q_U^\star(x)=F_{\theta^\star}^{-1}(1-\alpha/2\mid x).
\]
Assume that $F_{\theta^\star}(\cdot\mid x)$ is continuous and strictly
increasing in a neighborhood of $q_L^\star(x)$ and $q_U^\star(x)$ (equivalently,
no atoms at the endpoints and no flat spots locally).

\end{enumerate}
\end{assumption}

\begin{proof}[Proof of Theorem \ref{thm:oracle}]

    \textbf{Step 1: posterior concentration transfers to the endpoints.}
Fix $\delta>0$. By continuity of $q_L(\theta,x)$ at $\theta^\star$ (Assumption
\ref{assump:oracle}(A2)), there exists $\varepsilon(\delta)>0$ such that
$d(\theta,\theta^\star)\le \varepsilon(\delta)$ implies
$|q_L(\theta,x)-q_L^\star(x)|\le \delta$. Therefore,
\[
\Bigl\{\theta:|q_L(\theta,x)-q_L^\star(x)|>\delta\Bigr\}
\subseteq
\Bigl\{\theta:d(\theta,\theta^\star)>\varepsilon(\delta)\Bigr\}.
\]
Taking posterior probabilities conditional on $D_{\mathrm{tr}}$ gives
\[
\pi\!\left(|q_L(\theta,x)-q_L^\star(x)|>\delta\mid D_{\mathrm{tr}}\right)
\le
\pi\!\left(d(\theta,\theta^\star)>\varepsilon(\delta)\mid D_{\mathrm{tr}}\right).
\]
By posterior consistency (Assumption \ref{assump:oracle}(A1)), the right-hand
side converges to $0$ in $\mathbb P_{\theta^\star}$-probability, hence
\begin{equation}
\label{eq:pushforward-ql}
\pi\!\left(|q_L(\theta,x)-q_L^\star(x)|>\delta\mid D_{\mathrm{tr}}\right)
\xrightarrow{\ \mathbb P_{\theta^\star}\ }0.
\end{equation}
The same argument yields
\begin{equation}
\label{eq:pushforward-qu}
\pi\!\left(|q_U(\theta,x)-q_U^\star(x)|>\delta\mid D_{\mathrm{tr}}\right)
\xrightarrow{\ \mathbb P_{\theta^\star}\ }0.
\end{equation}
\\

\textbf{Step 2: posterior quantiles of concentrated random variables converge.}
We show $\ell(x)=C_L(x)\to q_L^\star(x)$ in probability; the proof for $u(x)$
is identical. 
Let $Z=q_L(\theta,x)$ under $\theta\sim\pi(\cdot\mid D_{\mathrm{tr}})$, and let
$C_L(x)$ be its $(\gamma(x)/2)$-quantile under that posterior.
Fix $\delta>0$. If the posterior puts at least $1-\eta$ mass on the interval
$[q_L^\star(x)-\delta,\ q_L^\star(x)+\delta]$ for some $\eta<\gamma(x)/2$, then
necessarily the $(\gamma(x)/2)$-quantile lies inside that interval, i.e.
$|C_L(x)-q_L^\star(x)|\le \delta$. Formally, on the event
\[
E_{\delta,\eta}
:=
\left\{
\pi\!\left(|Z-q_L^\star(x)|\le \delta\mid D_{\mathrm{tr}}\right)\ge 1-\eta
\right\},
\quad \text{ with }   \eta<\gamma(x)/2,
\]
we have $\pi(Z\le q_L^\star(x)-\delta\mid D_{\mathrm{tr}})\le \eta<\gamma(x)/2$,
so the $(\gamma(x)/2)$-quantile cannot be smaller than $q_L^\star(x)-\delta$.
Similarly, $\pi(Z\le q_L^\star(x)+\delta\mid D_{\mathrm{tr}})\ge 1-\eta >
\gamma(x)/2$, so the quantile cannot exceed $q_L^\star(x)+\delta$.
Hence, on $E_{\delta,\eta}$ we have $|C_L(x)-q_L^\star(x)|\le \delta$.

By \eqref{eq:pushforward-ql}, for any fixed $\eta>0$,
\[
\pi\!\left(|Z-q_L^\star(x)|\le \delta\mid D_{\mathrm{tr}}\right)
=1-\pi\!\left(|Z-q_L^\star(x)|> \delta\mid D_{\mathrm{tr}}\right)
\xrightarrow{\ \mathbb P_{\theta^\star}\ }1,
\]
so $\mathbb P_{\theta^\star}(E_{\delta,\eta})\to 1$. Therefore,
$\mathbb P_{\theta^\star}(|\ell(x)-q_L^\star(x)|>\delta)\to 0$, i.e.
$\ell(x)\to q_L^\star(x)$ in $\mathbb P_{\theta^\star}$-probability.
Repeating the same argument with \eqref{eq:pushforward-qu} yields
$u(x)\to q_U^\star(x)$, proving (i).
\\

\textbf{Step 3: $\hat\tau\to 0$.}
Let $(X,Y)$ denote an independent draw from the true distribution
$\mathbb P_{\theta^\star}$, independent of $D_{\mathrm{tr}}$.
Define the \emph{oracle} interval and score as
\[
I^\star(X):=[q_L^\star(X),q_U^\star(X)] \text{ and }
s^\star(X,Y):=d\!\left(Y,I^\star(X)\right),
\]
and the learned score (based on $D_{\mathrm{tr}}$ only) as
\[
s(X,Y):=d\!\left(Y,[\ell(X),u(X)]\right).
\]

\smallskip
\noindent\textbf{(a) The oracle score has $(1-\alpha)$-quantile equal to $0$.}
Fix $\varepsilon>0$. By Assumption~\ref{assump:oracle}(A3), for every $x$,
the map $t\mapsto F_{\theta^\star}(t\mid x)$ is continuous and strictly
increasing in a neighborhood of $q_L^\star(x)$ and $q_U^\star(x)$.
Therefore expanding the oracle interval by $\varepsilon$ increases its
conditional probability strictly:
\[
\mathbb P_{\theta^\star}\!\bigl(s^\star(X,Y)\le \varepsilon \mid X=x\bigr)
=
\mathbb P_{\theta^\star}\!\bigl(
Y\in[q_L^\star(x)-\varepsilon,\ q_U^\star(x)+\varepsilon]\mid X=x
\bigr)
> 1-\alpha.
\]
Taking expectation over $X$ yields
\begin{equation}
\label{eq:oracle-score-eps}
p^\star(\varepsilon)
:=
\mathbb P_{\theta^\star}\!\bigl(s^\star(X,Y)\le \varepsilon\bigr)
=
\mathbb E\!\left[
\mathbb P_{\theta^\star}\!\bigl(s^\star(X,Y)\le \varepsilon\mid X\bigr)
\right]
>1-\alpha.
\end{equation}

\smallskip
\noindent\textbf{(b) The learned score converges to the oracle score.}
Let $(X,Y)\sim \mathbb P_{\theta^\star}$ be independent of $D_{\mathrm{tr}}$.
By part~(i), for every $\delta>0$ we have, for $P_X$-a.e.\ $x$,
$\mathbb P_{\theta^\star}(|\ell(x)-q_L^\star(x)|>\delta)\to 0$ and
$\mathbb P_{\theta^\star}(|u(x)-q_U^\star(x)|>\delta)\to 0$.
Since these probabilities are bounded by $1$, dominated convergence implies
\[
\mathbb P_{\theta^\star}\big(|\ell(X)-q_L^\star(X)|>\delta\big)
=
\int \mathbb P_{\theta^\star}\big(|\ell(x)-q_L^\star(x)|>\delta\big)\,dP_X(x)
\ \longrightarrow\ 0,
\]
and similarly $\mathbb P_{\theta^\star}(|u(X)-q_U^\star(X)|>\delta)\to 0$.
Thus $\ell(X)\to q_L^\star(X)$ and $u(X)\to q_U^\star(X)$ in
$\mathbb P_{\theta^\star}$-probability.
Moreover, for any $y$ and any intervals $[a,b]$ and $[a',b']$,
\[
\bigl|d(y,[a,b]) - d(y,[a',b'])\bigr|
\le |a-a'|+|b-b'|.
\]
Applying this with $a=\ell(X)$, $b=u(X)$, $a'=q_L^\star(X)$, and
$b'=q_U^\star(X)$ yields
\[
|s(X,Y)-s^\star(X,Y)|
\le |\ell(X)-q_L^\star(X)| + |u(X)-q_U^\star(X)|
\xrightarrow{\ \mathbb P_{\theta^\star}\ } 0.
\]
Therefore 
\begin{align}
    \label{eq:score-conv}  s(X,Y)\xrightarrow{\mathbb P_{\theta^\star}} s^\star(X,Y).
\end{align}

\smallskip
\noindent\textbf{(c) The conditional score CDF at $\varepsilon$ exceeds $1-\alpha$ with high probability.}
Fix $\varepsilon>0$ and set $\varepsilon'=\varepsilon/2$.
On the event $\{|s(X,Y)-s^\star(X,Y)|\le \varepsilon'\}$,
the inclusion $\{s^\star(X,Y)\le \varepsilon'\}\subseteq\{s(X,Y)\le \varepsilon\}$
holds, hence
\[
\mathbb P_{\theta^\star}\!\bigl(s(X,Y)\le \varepsilon \mid D_{\mathrm{tr}}\bigr)
\ \ge\
\mathbb P_{\theta^\star}\!\bigl(s^\star(X,Y)\le \varepsilon'\bigr)
\ -\
\mathbb P_{\theta^\star}\!\bigl(|s(X,Y)-s^\star(X,Y)|>\varepsilon'\mid D_{\mathrm{tr}}\bigr).
\]
By \eqref{eq:oracle-score-eps}, the first term equals $p^\star(\varepsilon')>1-\alpha$.
By \eqref{eq:score-conv} and  Markov's inequality,
\[
\mathbb P_{\theta^\star}\!\bigl(|s(X,Y)-s^\star(X,Y)|>\varepsilon'\mid D_{\mathrm{tr}}\bigr)
\ \xrightarrow{\ \mathbb P_{\theta^\star}\ }\ 0.
\]
Therefore,
\begin{equation}
\label{eq:cond-cdf-gt}
\mathbb P_{\theta^\star}\!\bigl(s(X,Y)\le \varepsilon \mid D_{\mathrm{tr}}\bigr)
\ >\ 1-\alpha
\qquad\text{with probability tending to $1$.}
\end{equation}

\smallskip
\noindent\textbf{(d) Conclude $\hat\tau\to 0$.}
Let $s_1,\dots,s_m$ be the calibration scores and define
$\hat F_m(t)=\frac1m\sum_{i=1}^m\mathbb I\{s_i\le t\}$.
Conditional on $D_{\mathrm{tr}}$, the $s_i$'s are i.i.d., so by the weak law,
for any $\eta>0$,
\[
\hat F_m(\varepsilon)
\ -\
\mathbb P_{\theta^\star}\!\bigl(s(X,Y)\le \varepsilon \mid D_{\mathrm{tr}}\bigr)
\ \xrightarrow{\ \mathbb P_{\theta^\star}\ }\ 0
\quad\text{as }m\to\infty.
\]
Combine this with \eqref{eq:cond-cdf-gt}. Since $k=\lceil(m+1)(1-\alpha)\rceil$
satisfies $(k-1)/m \to 1-\alpha$, the event
$\{\hat F_m(\varepsilon)>(k-1)/m\}$ holds with probability tending to $1$, and
on this event at least $k$ scores are $\le\varepsilon$, implying the $k$-th
order statistic satisfies $\hat\tau\le\varepsilon$.
Thus for every $\varepsilon>0$,
\[
\mathbb P_{\theta^\star}(\hat\tau>\varepsilon)\to 0,
\]
i.e.\ $\hat\tau\xrightarrow{\mathbb P_{\theta^\star}}0$.

\textbf{Step 4: conditional coverage converges to $1-\alpha$.}
Fix $x$. The conditional coverage of the final interval is
\[
\mathbb P_{\theta^\star}(Y\in C(x)\mid X=x,\ D_{\mathrm{tr}},D_{\mathrm{cal}})
=
F_{\theta^\star}(u(x)+\hat\tau\mid x)\ -\ F_{\theta^\star}(\ell(x)-\hat\tau\mid x),
\]
using continuity of $F_{\theta^\star}(\cdot\mid x)$ (Assumption \ref{assump:oracle}(A3)).
By (i) and (ii) and Slutsky's theorem,
\[
\bigl(\ell(x)-\hat\tau,\ u(x)+\hat\tau\bigr)
\ \xrightarrow{\ \mathbb P_{\theta^\star}\ }\ 
\bigl(q_L^\star(x),\ q_U^\star(x)\bigr).
\]
Because $F_{\theta^\star}(\cdot\mid x)$ is continuous, the map
$(a,b)\mapsto F_{\theta^\star}(b\mid x)-F_{\theta^\star}(a\mid x)$ is continuous,
so by the continuous mapping theorem,
\[
F_{\theta^\star}(u(x)+\hat\tau\mid x)\ -\ F_{\theta^\star}(\ell(x)-\hat\tau\mid x)
\ \xrightarrow{\ \mathbb P_{\theta^\star}\ }\ 
F_{\theta^\star}(q_U^\star(x)\mid x)\ -\ F_{\theta^\star}(q_L^\star(x)\mid x).
\]
Finally, since $q_L^\star(x)$ and $q_U^\star(x)$ are the $\alpha/2$ and
$1-\alpha/2$ quantiles and the CDF is continuous at these points,
$F_{\theta^\star}(q_L^\star(x)\mid x)=\alpha/2$ and
$F_{\theta^\star}(q_U^\star(x)\mid x)=1-\alpha/2$, hence the limit is $1-\alpha$.
This proves (iii).
\end{proof}

\end{document}